\newcommand{\ep}{\mathbb{E}}
\newcommand{\F}{\mathcal{F}}
\newcommand{\xG}{\mathcal{G}}
\newcommand{\xD}{\mathcal{D}}
\newcommand{\uV}{\mathcal{V}}
\newcommand{\uU}{\mathcal{U}}
\newcommand{\uR}{\mathcal{R}}
\newcommand{\ty}{\bm{y}}
\newcommand{\tu}{\bm{u}}
\newcommand{\tc}{\bm{c}}
\newcommand{\tth}{\bm{h}}
\newcommand{\ttheta}{\bm{\theta}}
\newcommand{\tphi}{\bm{\phi}}
\newcommand{\fY}{\bm{Y}}
\newcommand{\fU}{\bm{U}}
\newcommand{\fC}{\bm{C}}
\newcommand{\fH}{\bm{H}}
\newcommand{\fT}{\bm{T}}
\newcommand{\fM}{\bm{M}}
\newcommand{\fTheta}{\bm{\Theta}}
\newcommand{\fPhi}{\bm{\Phi}}
\newcommand{\Real}{\mathbb{R}}
\newcommand{\Complex}{\mathbb{C}}
\newcommand{\fig}[1]{Fig.~\ref{fig:#1}}
\newcommand{\tabl}[1]{Table~\ref{tab:#1}}
\newcommand{\eqn}[1]{Eqn.~\eqref{eqn:#1}}
\newcommand{\secref}[1]{Sec.~\ref{sec:#1}}
\newtheorem{assumption}{Assumption}
\newtheorem{proposition}{Proposition}
\newtheorem{lemma}{Lemma}
\newtheorem{theorem}{Theorem}
\newtheorem{prop}{Proposition}
\newtheorem{definition}{Definition}
\theoremstyle{definition}
\newtheorem{myexp}{Example}
\newcommand{\tabincell}[2]{\begin{tabular}{@{}#1@{}}#2\end{tabular}}
\newcommand{\xmark}{\ding{55}}%
\begin{document}

\twocolumn[
\icmltitle{Understanding and Stabilizing GANs' Training Dynamics using Control Theory}
\icmlsetsymbol{equal}{*}
\begin{icmlauthorlist}
	\icmlauthor{Kun Xu}{thu}
	\icmlauthor{Chongxuan Li}{thu}
	\icmlauthor{Jun Zhu}{thu}
	\icmlauthor{Bo Zhang}{thu}
\end{icmlauthorlist}

\icmlaffiliation{thu}{Dept. of Comp. Sci. \& Tech., Institute for AI, BNRist Center, Tsinghua-Bosch ML Center, THBI Lab, Tsinghua University, Beijing, China}

\icmlcorrespondingauthor{Jun Zhu}{dcszj@mail.tsinghua.edu.cn}
\icmlkeywords{Machine Learning, ICML}
\vskip 0.3in
]


\printAffiliationsAndNotice{} %

\begin{abstract}
Generative adversarial networks~(GANs) are effective in generating realistic images but the training is often unstable.
There are existing efforts that model the training dynamics of GANs in the parameter space but the analysis cannot directly motivate practically effective stabilizing methods.
To this end, we present a conceptually novel perspective from control theory to directly model the dynamics of GANs in the function space and provide simple yet effective methods to stabilize GANs' training.
We first analyze the training dynamic of a prototypical Dirac GAN and adopt the widely-used closed-loop control (CLC) to improve its stability.
We then extend CLC to stabilize the training dynamic of normal GANs, where CLC is implemented as a squared $L2$ regularizer on the output of the discriminator.
Empirical results show that our method can effectively stabilize the training and obtain state-of-the-art performance on data generation tasks.
\end{abstract}

\section{Introduction}\label{sec:introduction}

Generative adversarial networks~(GANs)~\citep{goodfellow2014generative} have shown promise in generating realistic natural images~\citep{brock2018large} and facilitating unsupervised and semi-supervised learning~\citep{chen2016infogan,chongxuan2017triple,donahue2019large}.
In GANs, an implicit generator $\xG$ is defined by mapping a noise distribution to the data space. 
Since no density function is defined for the implicit generator, the maximum likelihood estimate is infeasible for GANs. Instead, a discriminator $\xD$ is introduced to estimate the density ratio between the data distribution $p$ and the generating distribution $p_\xG$ by telling the real samples from fake ones. $\xG$ aims to recover the data distribution by maximizing this ratio. This framework is formulated as a minimax optimization problem, which can be solved by optimizing $\xG$ and $\xD$ alternately.
In practice, however, GANs suffers from the instability of training~\cite{goodfellow2016nips}, where divergency and oscillations are often observed~\citep{liang2018generative,chavdarova2018sgan}.

Early methods~\citep{mao2017least,gulrajani2017improved, arjovsky2017wasserstein,du2018learning} introduce different types of divergences to improve the training process of GANs. Their theoretical analyses assume that $\xD$ achieves its optimum when training $\xG$.
However, the practical training process (e.g., alternative stochastic gradient descent) often violates the above assumption and therefore is not guaranteed to converge to the desired equilibrium. Several empirical regularizations~\cite{miyato2018spectral,gulrajani2017improved,zhang2019consistency} are used to improve the training process whereas no stability can be guaranteed.

Recently, \citet{mescheder2017numerics} and \citet{nagarajan2017gradient} directly model the training dynamics of GANs, i.e. how the parameters develop over time. 
Formally, the dynamic is defined as the gradient flow of the parameters.
The stability of the dynamic is fully determined by the eigenvalues of the Jacobian matrix of the gradient flow.
Indeed, the stability analysis in a linear prototypical GAN (i.e. Dirac GAN~\cite{mescheder2018training}) is elegant. 
However, this analysis does not directly motivate effective algorithms to stabilize GANs' training.
To our knowledge, such methods do not report competitive image generation results to the state-of-the-art GANs~\cite{miyato2018spectral}.

In this paper, we understand and stabilize GANs' training dynamics from the perspective of control theory.
Based on the recipe for control theory, we can not only analyze the dynamics of Dirac GAN formally, but also develop practically effective stabilizing methods for nonlinear dynamics~\cite{khalil2002nonlinear}.
Specifically, we start from revisiting the Dirac GAN example with the WGAN's objective function in \secref{dirac_gan}. By utilizing the Laplace transform~\cite{widder2015laplace}~(LT), the training dynamics of both $\xD$ and $\xG$ can be modeled in the {\it frequency domain} instead of the {\it time domain} in previous methods~\cite{mescheder2017numerics, mescheder2018training}.
These types of dynamics are well studied in control theory and the stability can be easily inferred. The analysis can be simply generalized to other objective functions with {\it local linearization}. Given the instability of GANs, the recipe for control theory provides a set of tools to stabilize their dynamics.
We first adopt the {\it closed-loop control}~(CLC) to successfully stabilize the dynamic of Dirac GAN with theoretical guarantee.
Besides, extensive empirical results in control theory show that the CLC is also helpful in nonlinear settings~\cite{khalil2002nonlinear}.
It inspires us to extend our proposal to normal GANs by modeling $\xD$ and $\xG$'s dynamics in the function space where these dynamics and Dirac GAN's dynamics share similar forms and characters.
The CLC is implemented as a regularization term to $\xD$'s objective function which penalizes the squared $L2$ norm of the output of $\xD$ as we described in \secref{implementation}.
We therefore refer our method as CLC-GAN.
CLC-GAN is verified on an 1-dimension toy example as well as the natural images including CIFAR10~\citep{krizhevsky2009learning} and CelebA~\citep{liu2015faceattributes}. The results demonstrate that our method can successfully stabilize the dynamics of GANs and achieve state-of-the-art performance.

Our contributions are summarized as:
\begin{itemize}
	\setlength\itemsep{-2pt}
	\item We formally analyze the training dynamics of GANs from a novel perspective of control theory, which is generally applicable to different objective functions.
	\item We propose to use the CLC as an effective method to stabilize the training of GANs, while other advanced control methods can be explored in future. 
	\item The simulated results on Dirac GAN agree with the theoretical analysis and CLC-GAN achieves the state-of-the-art performance on natural image generations.
\end{itemize}

\section{Preliminary}
\label{sec:preliminary}

In this section, we present the recipe for control theory, especially under the Laplace transform, which is powerful to model dynamic systems and design stabilizing methods.

\subsection{Modeling Dynamic Systems }
\label{sec:dynamicmodeling}

In control theory, a {\it signal} is represented as a function over time $t$, i.e., in the {\it time domain}~\cite{kailath1980linear}. 
A dynamic\footnote{For simplicity, we use {\it dynamic} for dynamic system.} represents how one signal (i.e., output, denoted by $\ty(t)$) develops with respect to another signal (i.e., input, denoted by $\tu(t)$) over time. A natural representation of a dynamic is a differential equation~(DE)\footnote{We consider ordinary differential equations in this paper.}:
\begin{align}
\frac{d\ty(t)}{dt} = f(\ty(t), \tu(t)),
\label{eqn:general_dynamic}
\end{align}
together with an initial condition $\ty(0) = \ty_0$.
Note that $f(\cdot, \cdot)$, $\ty(t)$ and $\tu(t)$ can be vector valued functions. 
We assume $\ty_0 = 0$ unless specified.
A dynamic is {\it linear} if $f(\cdot, \cdot)$ is a linear function.

Besides the time domain, a signal can also be represented as a function of frequency $s$, i.e., in the {\it frequency domain}.
A DE of a linear dynamic in the time domain can be converted to a simple algebraic equation in the frequency domain, which can largely simplify the solving process and stability analysis of a dynamic. Laplace transform~\cite{widder2015laplace}~(LT) is a widely-adopted operator to convert signals from the time domain to the frequency domain. Formally, LT is given by:
\begin{align}
\F(\tth)(s)=\int_{0}^{\infty} \tth(t) e^{-st} dt = \fH(s), \label{eqn:lap_definition}
\end{align}
where $\tth$ is a signal in the time domain, and $s=\sigma + \omega i\in \Complex$ with real numbers $\sigma$ and $\omega$. The real and imaginary parts of $\fH(s)\in\Complex$ denote the gain and phase of the frequency $s$ in $\tth$. In this paper, we use bold lowercase letters (e.g., $\ty, \tu$) to denote signals in the time domain and bold capital letters (e.g. $\fY, \fU$) to denote signals in the frequency domain.

Leveraging LT, the derivation over time $t$ can be represented as multiplying a factor $s$ in the frequency domain:
\begin{align}
\F(\frac{d\tth(t)}{dt}) = s \F(\tth).
\end{align}
Therefore, by applying LT to both sides of a DE in \eqn{general_dynamic}, a linear dynamic can be solved by the formal rules of algebra and represented in the form of $\fY(s) = \fT(s)\fU(s)$, where $\fT(s)$ is a simple rational fraction called {\it transfer function}~\cite{kailath1980linear}.
The transfer function can facilitate the stability analysis, as detailed in \secref{stability_analysis}.

\subsection{Stability Analysis}
\label{sec:stability_analysis}

In general, we require a dynamic to be {\it stable}. Although different definitions exist, we consider the widely adopted asymptotic stability\footnote{This definition is consistent with existing work in \citet{mescheder2017numerics} and \citet{mescheder2018training}.}~\cite{kailath1980linear} in this paper.
\begin{definition}
	For a constant input $\tu(t) = \tu_c$, a point $\ty_e$ is called an equilibrium point of a dynamic represented in \eqn{general_dynamic}, if $f(\ty_e, \tu_c) = 0$. A dynamic is called asymptotically stable if for every $\epsilon$ > 0, there exists $\sigma > 0$ such that if $||\ty(0) - \ty_e|| < \sigma$, then for every $t>0$, $||\ty(t) - \ty_e|| < \epsilon$ and $\lim_{t\to \infty} ||\ty(t) - \ty_e|| = 0$. Here $||\cdot||$ is a norm defined in the vector space of $\ty$.
\end{definition}

In the frequency domain, the stability can be directly inferred from the transfer function. Formally, we define {\it poles} as the roots of the denominator in a transfer function. The stability of a linear dynamic is fully determined by its poles as summarized in the following proposition.\newpage

\begin{prop}(Theorm 2.6-1 in \citet{kailath1980linear})
\vspace{-.15cm}
\begin{enumerate}
	\setlength\itemsep{-2pt}
	\item A dynamic is asymptotic stable if all poles have negative real parts.
	\item A dynamic is oscillatory (i.e., bounded output but not stable) if one or more poles are purely imaginary.
	\item A dynamic is diverged (unbounded output) if one or more poles have positive real parts.
\end{enumerate}
\label{prop_stability}
\end{prop}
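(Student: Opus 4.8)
The plan is to work entirely in the frequency domain and then invert back to the time domain via partial fractions, reusing the machinery introduced above. Since the dynamic is linear, its transfer function is a rational fraction $\fT(s) = N(s)/D(s)$, and by definition the poles are exactly the roots of $D(s)$. For a constant input $\tu(t) = \tu_c$, the deviation $\ty(t) - \ty_e$ from the equilibrium obeys a homogeneous linear DE, so the asymptotic behavior required by the stability definition is governed by the inverse LT of $\fT(s)$ (together with a steady-state term coming from the input pole at the origin, which does not affect convergence of the deviation). Thus I would reduce the three claims to understanding the time-domain modes encoded by the poles of $\fT(s)$.

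First I would perform a partial fraction decomposition. Writing the distinct poles as $p_1, \dots, p_r \in \Complex$ with multiplicities $m_1, \dots, m_r$, one obtains
\begin{align}
\fT(s) = \sum_{j=1}^{r} \sum_{k=1}^{m_j} \frac{c_{jk}}{(s - p_j)^k},
\end{align}
up to a polynomial part that corresponds to impulsive terms and is irrelevant to asymptotic stability. Inverting each term with the standard pair $\F^{-1}\big((s-p)^{-k}\big)(t) = \frac{t^{k-1}}{(k-1)!} e^{pt}$, the time-domain response becomes a linear combination of modes of the form $t^{k-1} e^{p_j t}$.

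Next I would read off the three cases from the sign of $\mathrm{Re}(p_j)$. Writing $p_j = \sigma_j + \omega_j i$, each mode has magnitude $t^{k-1} e^{\sigma_j t}$: if $\sigma_j < 0$ the exponential dominates the polynomial factor and the mode decays to zero; if $\sigma_j > 0$ it grows without bound; and if $\sigma_j = 0$ for a simple pole, the mode reduces to the bounded but non-decaying oscillation $e^{\omega_j i t}$. The overall trajectory is dominated by the ``worst'' pole, giving exactly the trichotomy: all poles in the open left half-plane force every mode to vanish, so $\lim_{t\to\infty} \|\ty(t) - \ty_e\| = 0$ and the dynamic is asymptotically stable; any pole with $\sigma_j > 0$ yields a diverging mode; and purely imaginary poles (absent any right-half-plane pole) yield bounded oscillation.

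The main obstacle will be the careful treatment of repeated poles and the $\epsilon$--$\sigma$ bookkeeping demanded by the definition. For left-half-plane poles I must confirm that the polynomial prefactor $t^{k-1}$ cannot spoil decay, which holds because $t^{k-1} e^{\sigma_j t} \to 0$ whenever $\sigma_j < 0$, and that choosing the initial deviation small enough keeps the entire trajectory within $\epsilon$. The purely imaginary case carries the standard caveat that the pole be \emph{simple}: a repeated imaginary pole would introduce a growing factor $t^{k-1}$ and actually diverge, so I would either restrict the oscillatory statement to simple imaginary poles or flag this exception explicitly.
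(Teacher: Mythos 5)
The paper never proves this proposition---it is quoted verbatim as Theorem 2.6-1 of Kailath (1980)---and your argument is precisely the standard modal analysis that the cited textbook uses: partial-fraction expansion of the rational transfer function, term-by-term inverse Laplace transform, and classification of the modes $t^{k-1}e^{p_j t}$ by the sign of $\mathrm{Re}(p_j)$. Your proof is correct at the same level of rigor as the statement itself, and your closing caveat (a \emph{repeated} purely imaginary pole produces a growing factor $t^{k-1}$, so item 2 holds only for simple imaginary poles) is a genuine refinement that the proposition as stated in the paper silently omits.
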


\subsection{Control Methods}
\label{sec:control_method}

For an unstable dynamic, control theory provides a set of methods to improve its stability. Among them, 
the {\it closed-loop control}~\cite{kailath1980linear}~(CLC) is one of the most popular ones and robust to nonlinearity in dynamics practically. 

The central idea is to modify the transfer function by feeding the output back to the input such that all poles have negative real parts.
Specifically, we introduce an additional dynamics called {\it controllers} with transfer functions $\fT_b(s)$ to adjust the output signal and input signal respectively. The controller takes $\fY(s)$ as input and output the feedback signal $\fY_b = \fT_b(s)\fY(s)$. We then substitute the difference between $\fU$ and $\fY_b$ (i.e., $\fM = \fU-\fY_b$) for input in the original dynamics, resulting the output signal as $\fY(s) = \fT(s)\fM(s)$. The relationship between the input $\fU(s)$ and the output $\fY(s)$ is:
\begin{align}
\fY(s) = \fT(s)(\fU(s) - \fT_b(s)Y(s)).\label{eqn:feed_back_negative}
\end{align}
Further, the whole controlled dynamic is given as:
\begin{align}
\fY(s)=\frac{\fT(s)}{1+\fT_b(s)\fT(s)}\fU(s).\label{eqn:feedback_transfer}
\end{align}
With a properly designed $\fT_b$, the poles of the dynamic in \eqn{feedback_transfer} can have negative real parts and the dynamic is stabilized. In the following, we first model and stabilize the training dynamic of Dirac GAN: a simplified GAN with linear dynamics in \secref{dirac_gan} and then we generalize it to the realistic setting in \secref{normal_gan}.

\section{Analyzing Dirac GAN by Control Theory}\label{sec:dirac_gan}

In this section, we focus on the Dirac GAN~\cite{mescheder2018training}, which is a widely adopted example to analyze the stability of GANs.
Previous work~\cite{mescheder2017numerics,gidel2018negative} uses the Jacobian matrix to analyze the stability of dynamics whereas does not directly provide an approach to stabilize it. Instead, we revisit this example from the perspective of control theory and develop a principled method that not only analyzes but also improves the stability of various GANs.


\subsection{Modeling Dynamics}
\label{sec:gan_dynamic_model}


We first model the dynamics of the Dirac GANs in the language of control theory, which can facilitate the stability analysis and improvement in \secref{ana_imp_stability}.
In Dirac GAN, $\xG$ is defined as $p_\xG(x) = \delta(x-\theta)$  where $\delta(\cdot)$ is the Dirac delta function, and $\xD$ is defined as $\xD(x) = \phi x$. $\theta$ and  $\phi$ are the parameters of $\xG$ and $\xD$ respectively. The data distribution is $p(x) = \delta(x - c)$ with a constant $c$. 
Generally, the objective functions of $\xD$ and $\xG$ can be written as:
\begin{align}
& \max_{\phi} \uV_1(\phi; \theta) = h_1(\xD(c)) + h_2(\xD(\theta)), \nonumber\\
& \max_{\theta} \uV_2(\theta; \phi) = h_3(\xD(\theta)).
\end{align}
Here $h_i(\cdot): \Real\to\Real$ is a scalar function for $i\in\{1,2,3\}$.  
Assuming that the equilibrium point of $\xD$ is a zero function as in most GANs~\cite{goodfellow2014generative,arjovsky2017wasserstein}, it is required that $h_1(\cdot)$ and $h_3(\cdot)$ are increasing functions and $h_2(\cdot)$ is a decreasing function around zero. For instance, when $h_1(x) = h_3(x) = \log(\sigma(x))$ and $h_2(x) = \log(1-\sigma(x))$ with $\sigma(\cdot)$ denoting the sigmoid function, we obtain the vanilla GAN~\cite{goodfellow2014generative}.

Since $\theta$ and $\phi$ are updated using gradient descent, we can denote the training trajectories as signals $\ttheta$ and $\tphi$. The dynamics are defined by the following gradient flow:
\begin{align}
\frac{d\tphi(t)}{dt} = \frac{\partial \uV_1(\phi; \theta)}{\partial \phi}|_{\phi=\tphi(t),\theta=\ttheta(t)}, \nonumber\\
\frac{d\ttheta(t)}{dt} = \frac{\partial \uV_2(\theta; \phi)}{\partial \theta}|_{\phi=\tphi(t),\theta=\ttheta(t)}.
\end{align}
Specifically, for the dynamics of $\xD$, we have:
\begin{align}
\frac{\partial \uV_1(\phi; \theta)}{\partial \phi} = &\frac{d h_1(\xD(c))}{d\phi}+ \frac{d h_2(\xD(\theta)) }{d\phi} \label{eqn:dynamic_dirac_D}.
\end{align}
Similarly, for the dynamics of $\xG$, we have:
\begin{align}
\frac{\partial \uV_2(\theta;\phi)}{\partial \theta} &= \frac{d h_3(\xD(\theta))}{d \xD(\theta)}\frac{\partial \xD(\theta)}{\partial \theta} \label{eqn:dynamic_dirac_G}.
\end{align}
Substituting $\xD(x) = \phi x$ to \eqn{dynamic_dirac_D} and \eqn{dynamic_dirac_G}, the dynamics of Dirac GAN can be summarized as:
\begin{align}
& \frac{d\tphi(t)}{dt} = h_1^\prime(\tphi(t) c)c + h_2^\prime(\tphi(t) \ttheta(t))\ttheta(t), \nonumber\\
& \frac{d\ttheta(t)}{dt} = h_3^\prime(\tphi(t) \ttheta(t))\tphi(t), \label{eqn:dynamic_dirac_summarize}
\end{align}
where $h_i^\prime(\cdot)$ denotes the derivative of $h_i(\cdot)$ for $i\in\{1, 2, 3\}$.

From the perspective of control theory (see details in \secref{dynamicmodeling}), \eqn{dynamic_dirac_summarize} represents a dynamic in the time domain, which is natural to understand but difficult to analyze. Converting it to the frequency domain by the Laplace transform~(LT) can simplify the analysis. It requires a case by case derivation for different GANs due to the specific forms of the objective functions~(i.e., different choices of $h_i(\cdot)$).
We will first use WGAN as an example to present the analyzing process and then generalize it to other objectives via the local linearization technique in \secref{local_linearize}.

In WGAN\footnote{We ignore the Lipschitz continuity of $\xD$ for simplicity but the equilibrium point and its local convergence do not change. See theoretical analysis and empirical evidence in Appendix~B.}, we have $h_1(x) = h_3(x) = x$ and $h_2(x) = -x$. Let the output $\ty(t)=(\ttheta(t), \tphi(t))$ and the input $\tu(t) = c,~\forall t > 0$. Then, the dynamic in \eqn{dynamic_dirac_D} and \eqn{dynamic_dirac_G} is instantiated as:
\begin{align}
\frac{d\ty(t)}{dt} = \begin{bmatrix}0 & 1 \\ -1 & 0 \end{bmatrix}\begin{bmatrix}\ttheta(t) \\ \tphi(t) \end{bmatrix} + \begin{bmatrix}0 \\ \tu(t)\end{bmatrix} = f(\ty(t), \tu(t)). \label{eqn:dirac_gan}
\end{align}
Applying LT $\mathcal{F}(\cdot)$ in \eqn{lap_definition} to both sides of \eqn{dirac_gan}, the dynamic can be represented in the frequency domain as:
\begin{align}
\begin{cases}
s\fPhi(s) = \fU(s) - \fTheta(s),\\
s\fTheta(s) = \fPhi(s).
\end{cases}
\label{eqn:dynamic_diracgan_lap}
\end{align}
where $\fTheta, \fPhi, \fU$ represent $\ttheta, \tphi, \tu$ in the frequency domain, e.g., $\fU(s) = \mathcal{F}(\tu)(s)$.
Then we can solve the dynamics of $\fPhi$ and $\fTheta$ according to the formal rules of algebra as:
\begin{align}
\begin{cases}
\fPhi(s) = \frac{s}{s^2 + 1}\fU(s), \\
\fTheta(s) = \frac{1}{s}\fPhi(s) = \frac{1}{s^2 + 1}\fU(s).
\end{cases}\label{eqn:dynamic_frequency_parameter}
\end{align}
In the frequency domain, the output signal can be represented as a multiplication between the transfer function (see \secref{dynamicmodeling}) and the input signal.
Specifically, in \eqn{dynamic_frequency_parameter}, the transfer function of $\tphi$ is $\fT_\xD(s) = \frac{s}{s^2 + 1}$ and the transfer function of $\ttheta$ is $\fT_\xG(s) = \frac{1}{s^2+1}$.
According to Proposition \ref{prop_stability}, the stability of a dynamic is fully characterized by the poles of the transfer function (i.e., the roots of the denominator).
The poles of both $\ttheta$ and $\tphi$ are $\pm i$ according to \eqn{dynamic_frequency_parameter}. Therefore, both $\ttheta$ and $\tphi$ are oscillatory instead of converging to the equilibrium point $(\ttheta_e, \tphi_e) = (c, 0)$. The simulated dynamic of Dirac GAN is illustrated in \fig{simulated_diracgan}. 


\subsection{Analyzing and Improving Stability}
\label{sec:ana_imp_stability}

\begin{figure}
	\centering
	\includegraphics[width=0.22\textwidth]{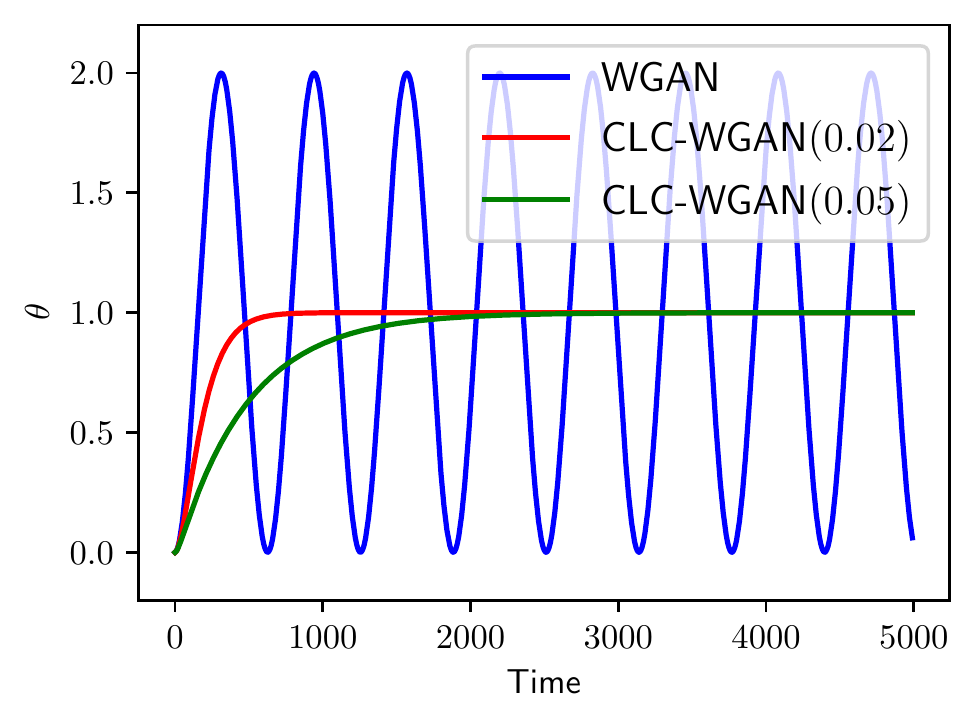}
	\includegraphics[width=0.22\textwidth]{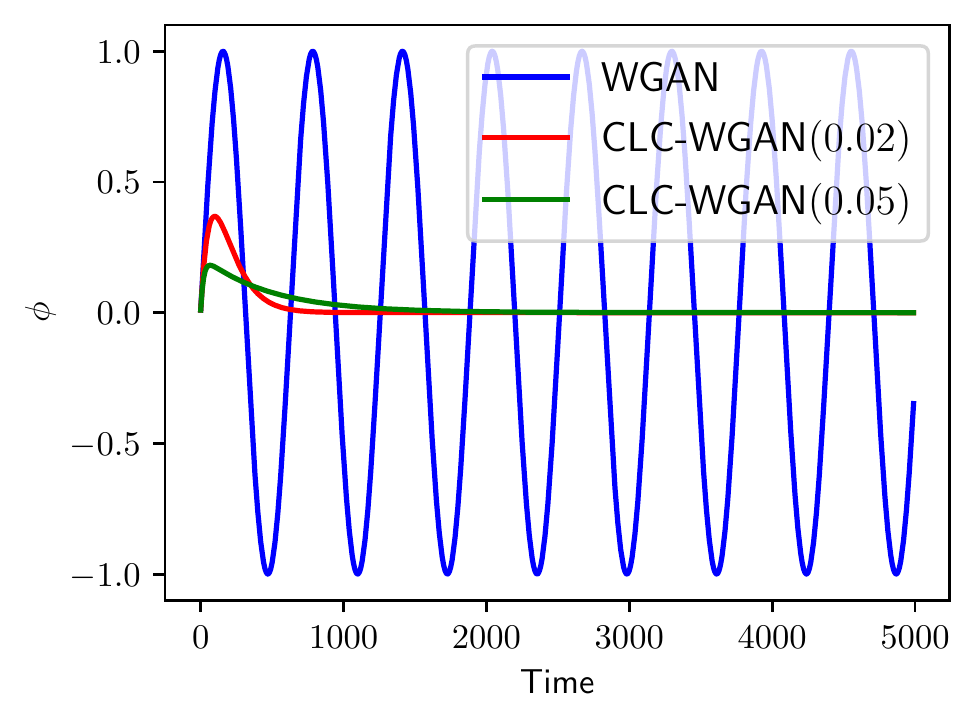}
	\caption{The simulated dynamic of Dirac GAN for $\ttheta$ (left) and $\tphi$ (right) with $c=1$. The curve of WGAN shows the oscillation while Other curves of CLC-GAN show that the closed loop control helps convergence.}
	\label{fig:simulated_diracgan}
\end{figure}

Control theory provides extensive methods~\cite{khalil2002nonlinear} to improve the stability of dynamics without changing the desired equilibrium. In this paper, the widely used closed-loop control~(CLC) is introduced in \secref{control_method} for its simplicity. We emphasize that advanced control methods can potentially result in more stable GANs and we leave it as future work.

Before applying the CLC, we emphasize that there are two requirements to be satisfied simultaneously: 1) applying the CLC needs to stabilize the dynamics of $\xD$ and $\xG$; 2) it should not change the equilibrium point of $\xG$, i.e., $p_\xG = p$. 

For the first requirement, the dynamic of $\ttheta$ in Dirac GAN is $\frac{d\ttheta(t)}{dt} = h_3^\prime(\tphi(t) \ttheta(t))\tphi(t)$, which indicates that stabilizing $\tphi$ to zero can also stabilize the dynamic of $\ttheta$. Therefore, we only need to introduce the CLC to $\xD$.
The central idea of the CLC is to adjust the transfer function by introducing an auxiliary controller. Here we adopt a simple and widely used controller $\fT_b(s)=\lambda$. Intuitively, it is an amplifier with negative feedback from output to input according to \eqn{feed_back_negative} and $\lambda$\footnote{$\lambda$ is a hyperparameter and we analyze its sensitivity in \secref{experiment}.} is the coefficient for the amplitude of the feedback.
Substituting $\fT_b$ with $\lambda$ in \eqn{feedback_transfer}, the transfer function $\fT_{c\xD}$ of the controlled $\tphi$ is given by:
\begin{align}
\fT_{c\xD}(s) = \frac{\frac{s}{s^2+1}}{1 + \frac{\lambda s}{s^2 + 1}} = \frac{s}{s^2 + \lambda s + 1}.
\end{align}
With a positive $\lambda$, all of poles in the controlled dynamic have negative real parts, and hence it is a stable dynamic. We also demonstrate the simulated results of the controlled dynamic with different values of $\lambda$ in \fig{simulated_diracgan}.

For the second requirement, the CLC will not change the equilibrium point of Dirac GAN. In the time domain, the CLC is equivalent to adjust the dynamics of $\tphi$ as:
\begin{align}
\frac{d \tphi}{dt} = c - \ttheta(t) - \lambda \tphi(t).
\end{align}
Since the equilibrium point of $\xD$ is a zero function, i.e., $\tphi_e = 0$, then we still have $\frac{d \ty(t)}{dt} = 0$ at $\ty = (\ttheta_e, \tphi_e)$.

\setlength{\tabcolsep}{1.4mm}{
	\begin{table*}[htbp]
		\centering
		\caption{The stability characters for the widely-used GANs. Please refer to Appendix A for detailed derivation, which adopts the local linearization technique introduced in \secref{local_linearize}.
			With CLC, the training dynamics of Dirac GANs are stable theoretically (see \fig{simulated_diracgan} and Appendix~A), and those of normal GANs are stable empirically (see \fig{learning_curve}).}
		\begin{tabular}{ccccc}
			\toprule
			& $\fT_\xD(s)$  & \tabincell{c}{Stability \\ Dirac GAN/normal GAN}& $\fT_{c\xD}(s)$ & \tabincell{c}{Stability with CLC \\ Dirac GAN/normal GAN}\\
			\midrule
			WGAN  & $s/(s^2+1)$ & \xmark/\xmark &  $1/(s^2+\lambda s + 1)$ & $\checkmark$/$\checkmark$ \\
			Hinge-GAN & $s/(s^2+1)$ & \xmark/\xmark & $1/(s^2+\lambda s + 1)$ & $\checkmark$/$\checkmark$ \\
			SGAN  & $2s/(4s^2+2s+1)$ & $\checkmark$/\xmark & $1/(4s^2+(2\lambda + 2)s+1)$ & $\checkmark$/$\checkmark$ \\
			LSGAN & $s/(s^2+4s+1)$ & $\checkmark$/\xmark & $1/(s^2+(\lambda + 4)s+1)$ & $\checkmark$/$\checkmark$ \\
			\bottomrule
		\end{tabular}%
		\label{tab:summarize_dynamic}%
\end{table*}}

\subsection{Extending to Other Objectives}
\label{sec:local_linearize}

The proposed method is not limited to WGAN but can be generalized to other GANs~\cite{goodfellow2014generative, mao2017least}, which may have nonlinear objective functions.

We leverage a standard technique called {\it local linearization}~\cite{khalil2002nonlinear} to approximate the original dynamics as a linear one around the equilibrium point.
For example, the objective function of $\xD$ in the vanilla GAN is:
\begin{align}
\max_{\phi} \uV_s(\phi, \theta) = \log(\sigma(\phi c)) + \log(1-\sigma(\phi \theta)).
\end{align}
The dynamic of $\tphi$ is nonlinear because of the sigmoid function, which is given by:
\begin{align}
&\frac{d\tphi(t)}{dt} = \frac{\partial \uV_s(\phi, \theta)}{\partial \phi}|_{\phi=\tphi(t), \theta=\ttheta(t)} \nonumber\\
&= \frac{\sigma^\prime(\tphi(t) c)}{\sigma(\tphi(t) c)}c - \frac{\sigma^\prime(\tphi(t) \ttheta(t))}{1 - \sigma(\tphi(t) \ttheta(t))}\ttheta(t),
\end{align}
where $\sigma^\prime(\cdot)$ is the derivative of $\sigma(\cdot)$.
Local linearization approximates the original dynamic by the first order Taylor expansion at the equilibrium point $(c, 0)$:
\begin{align}
&\frac{\partial \uV_s(\phi, \theta)}{\partial \phi}\approx \frac{\partial \uV_s(\phi, \theta)}{\partial \phi}|_{\phi=0, \theta=c} + 
\frac{\partial^2 \uV_s(\phi, \theta)}{\partial \phi^2}|_{\phi=0, \theta=c}\phi  \nonumber\\
&+ \frac{\partial^2 \uV_s(\phi, \theta)}{\partial \theta\partial \phi}|_{\phi=0, \theta=c}(\theta-c) 
= -\frac{1}{2} \phi - \frac{1}{2}(\theta - c).
\label{eqn:local_linear}
\end{align}

Note that the stability is determined by the local character of the equilibrium point, around which the residual in \eqn{local_linear} is negligible. Therefore, we have a linear approximation and the the analysis in \secref{ana_imp_stability} applies. We summarize the stability characters for all GANs in \tabl{summarize_dynamic}.

\begin{algorithm}[t]
	\begin{algorithmic}[1]
		\caption{Cloosed-loop Control GAN}\label{algo:NFGAN}
		\STATE {\bfseries Input:} Buffer size $N_b$, feedback coefficient $\lambda$, batch size $N$, initialized $\xG$ and $\xD$, learning rate $\eta$.
		\STATE Initialize $B_r$ and $B_f$ for real samples and fake samples respectively.
		\REPEAT
		\STATE Sample a batch of $\{x_r\} \sim p$, $\{x_f\}\sim p_\xG$ of $N$ samples.
		\STATE Update $B_r$ with $\{x_r\}$. Update $B_f$ with $\{x_f\}$.
		\STATE Sample a batch of $x_r^\prime \sim B_r$, $x_f^\prime\sim B_f$ of $N$ samples respectively.
		\STATE Estimate the objective of $\xD$:\\
		$\uU(\xD)=\frac{1}{N}[\sum_{x\in\{x_r\}} \xD(x) - \sum_{x\in\{x_f\}} \xD(x)] - \frac{\lambda}{N}[\sum_{x\in\{x^\prime_r\}} \xD^2(x) + \sum_{x\in\{x^\prime_f\}} \xD^2(x)].$
		\STATE Update $\xD$ to maximize $\uU(\xD)$ with learning rate $\eta$.
		\STATE Estimate the objective of $\xG$:
		$\uU(\xG) = \frac{1}{N}\sum_{x\in\{x_f\}} \xD(x)$.
		\STATE Update $\xG$ to maximize $\uU(\xG)$ with learning rate $\eta$.
		\UNTIL{Convergence}
	\end{algorithmic}
\end{algorithm}

\section{Extensions to Normal GANs}\label{sec:normal_gan}

In \secref{dirac_gan}, we show that the dynamic of Dirac GAN can be formally analyzed and stabilized based on the recipe for control theory.
Besides, the CLC can successfully stabilize nonlinear dynamics in control theory~\cite{khalil2002nonlinear}.
This two facts inspire us to stabilize the training dynamic of a normal GAN (i.e., parameterized by neural networks) by incorporating the CLC.
Unlike previous methods~\cite{mescheder2018training} which mainly focus on the dynamics of parameters of $\xD$ and $\xG$, we instead model the dynamics of $\xG$ and $\xD$ in the function space, i.e., $\xD = \xD(x, t)$ and $\xG = \xG(z, t)$. It can simplify the analysis and build the connections between the Dirac GAN and the normal GANs.

Following the notation in \secref{dirac_gan}, the objective function of a general GAN is:
\begin{align}
& \max_{\xD} \uV_1(\xD; \xG) = \ep_{p(x)}[h_1(\xD(x))] + \ep_{p_\xG(x)}[h_2(\xD(x))], \nonumber\\
& \max_{\xG} \uV_2(\xG; \xD) = \ep_{p_z(z)} [h_3(\xD(\xG(z)))].
\end{align}
According to the calculus of variations~\citep{gelfand2000calculus}, the gradient of $\uV_1(\xD)$ with respect to the function $\xD$ is:
\begin{align}
\frac{\partial \uV_1(\xD; \xG)}{\partial \xD} = p\frac{d h_1(\xD)}{d\xD}+ p_\xG\frac{d h_2(\xD) }{d\xD},
\end{align}
where $\frac{dh_i(\xD)}{d\xD}(x) = \frac{dh_i(u)}{du}|_{u=\xD(x)} = \frac{d h_i(\xD(x))}{d \xD(x)}$ for $i \in \{1, 2\}$.
The gradient of $\uV_2(\xG)$ with respect to $\xG$ is:
\begin{align}
\frac{\partial \uV_2(\xG)}{\partial \xG} = p_z \frac{d h_3(\xD(\xG))}{d\xG},
\end{align}
where $\frac{d h_3(\xD(\xG))}{d\xG}(z) = \frac{d h_3(\xD(\xG(z)))}{d \xD(\xG(z))}\frac{\partial \xD(\xG(z))}{\partial \xG(z)}$.

Therefore, the dynamics of $\xD$ and $\xG$ in normal GANs can be denoted generally as:
\begin{align}
&\frac{d\xD(x, t)}{dt} = p(x)\frac{d h_1(\xD(x))}{d\xD(x,t)}+ p_\xG(x)\frac{d h_2(\xD(x)) }{d\xD(x)}, \forall x,\nonumber \\
&\frac{d\xG(z, t)}{dt} = p_z(z)\frac{d h_3(\xD(\xG(z)))}{d \xD(\xG(z))}\frac{\partial \xD(\xG(z))}{\partial \xG(z)}, \forall z.
\end{align}
Note that the above dynamics is quiet similar to the dynamic of Dirac GAN by substituting $\xG$ and $\xD$ for $\theta$ and $\phi$ in \eqn{dynamic_dirac_D} and \eqn{dynamic_dirac_G} respectively.
Specifically, in both dynamics, the discriminators take the weighted summation of $p$ and $p_\xG$. For the generator, both of them depend on the $\frac{\partial \xD(\xG(z))}{\partial \xG(z)}$.
The above similarity between Dirac GANs and normal GANs inspires us to directly apply the CLC in nonlinear settings. Our empirical results in various settings (see \secref{experiment}) demonstrate the effectiveness of the proposed method, which agrees with the above analysis and \tabl{summarize_dynamic}.

\subsection{Implementing CLC in GANs}
\label{sec:implementation}

According to \secref{ana_imp_stability}, we apply the CLC with a controller $\fT_b(s)=\lambda$ to normal GANs. 
The resulting dynamic of $\xD$ is
\begin{align}
\frac{d\xD(x, t)}{dt} = \frac{\partial \uV_1(\xD; \xG)}{\partial \xD} - \lambda \xD(x), \forall x.\label{eqn:dynamic_con_D}
\end{align}
Note that $\xD$ will be optimized by gradient descent in the implementation and we need to design a proper objective function whose gradient flow is equivalent to \eqn{dynamic_con_D}.
Therefore, we introduce an auxiliary regularization term to the original GANs and get:
\begin{align}
\uV_1^\prime(\xD; \xG) = \uV_1(\xD; \xG) - \frac{\lambda}{2}\int_{x\in \mathcal{X}} \xD^2(x)dx,
\end{align}
where $\mathcal{X}$ denotes the space of $x$, e.g., $\mathcal{X} = [-1, 1]^{c\times w\times h}$ for image generation of size $w\times h \times c$.
Below, we denote $\uR(\xD) = \int_{x\in\mathcal{X}} \xD^2(x)dx$, which is the squared 2-norm of the function $\xD$ over the space of $x$. Intuitively, minimizing $\uR(\xD)$ encourages $\xD$ to converge to a zero function.

The regularization term $\uR(\xD)$ is proportional to the expectation of $\xD^2$ with respect to a uniform distribution $p_u(x)$ defined on $\mathcal{X}$, i.e., $\uR(\xD) \propto \ep_{p_u(x)}[\xD^2(x)]$.
However, directly estimating $\uR(\xD)$ is not sample efficient since most of samples in $\mathcal{X}$ is meaningless and do not provide useful training signals to stabilize $\xD$.
Instead, we maintain two buffers $B_r$ and $B_f$ of fix size $N_b$ to store the old real samples and fake samples, respectively. We define a uniform distribution $p_u^t(x)$ on $B^t=B_r^t\cup B_f^t$ to approximate $\uR(\xD)$ as:
\begin{align}
\uR_t(\xD) &= \int_{x\in\mathcal{X}} p_u^t(x)\xD^2(x)dx. \label{eqn:clc_regularization}
\end{align}
where $\uR_t(\xD)$ denotes the regularization term at time $t$.
$\uR_t(\xD)$ is estimated using Monte Carlo and these buffers are updated with replacement. As analyzed below, using $\uR_t(\xD)$ to approximate $\uR(\xD)$ will not change the equilibrium and stability.
The training procedure is presented in Alg.~\ref{algo:NFGAN}.

\begin{figure}[t]
	\centering
	\includegraphics[width=0.48\textwidth]{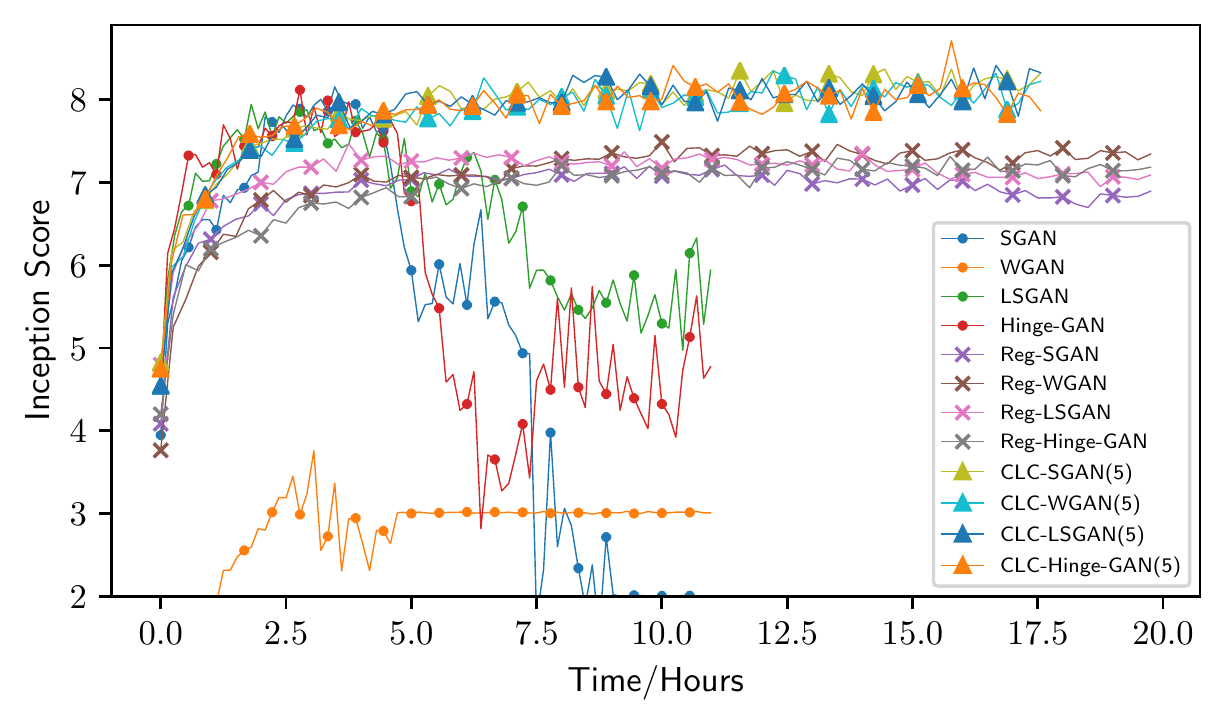}
	\includegraphics[width=0.48\textwidth]{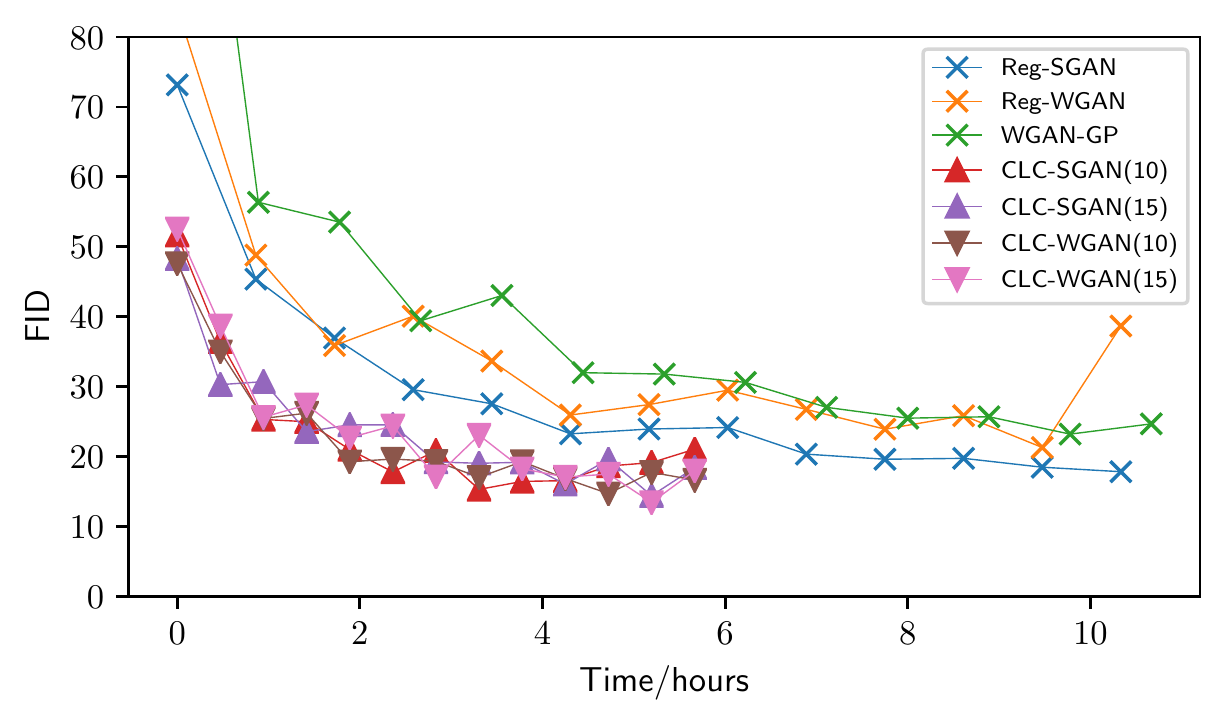}
	\caption{The learning curve of the baselines and our proposed method. Top: The Inception Score of CIFAR10. Bottom: The FID score of CelebA. We plot the curves with respect to the time for better representation of the computational cost.}
	\label{fig:learning_curve}
\end{figure}

\subsection{Theoretical Analysis}
Below, we first prove that the regularization term in \eqn{clc_regularization} will not change the desirable equilibrium point of GANs, i.e., $p_\xG=p$, as summarized in Lemma~1.

\begin{lemma}
Under the non-parametric setting, CLC-GAN has the same equilibrium as the original GAN, i.e, $p_\xG=p$ and $\xD(x)=0$ for all x.
\end{lemma}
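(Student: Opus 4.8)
The plan is to verify, by direct substitution, that the configuration $\xD(x) = 0$ for all $x$ together with $p_\xG = p$ satisfies the equilibrium conditions of the CLC-GAN dynamics, and then to argue that this point coincides with the equilibrium of the unregularized GAN. Recall that an equilibrium is a configuration at which both gradient flows vanish, i.e.\ $\frac{d\xD(x,t)}{dt} = 0$ for all $x$ and $\frac{d\xG(z,t)}{dt} = 0$ for all $z$. The structural observation I would lean on is that the CLC regularizer modifies \emph{only} the discriminator's flow, appending the term $-\lambda\xD(x)$ to $\frac{\partial\uV_1(\xD;\xG)}{\partial\xD}$ as in \eqn{dynamic_con_D}, while leaving the generator's flow untouched. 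Consequently the only way the regularizer could shift the equilibrium is through this single additive term, and the heart of the argument is that this term is identically zero wherever $\xD \equiv 0$.

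First I would dispatch the generator. Since its dynamic is unchanged, I only need to check that it vanishes at the candidate point. Writing it out,
\begin{align}
\frac{d\xG(z,t)}{dt} = p_z(z)\, h_3'(\xD(\xG(z)))\,\frac{\partial \xD(\xG(z))}{\partial \xG(z)},\nonumber
\end{align}
I would observe that if $\xD \equiv 0$ on all of $\mathcal{X}$ then $\xD$ is the zero function, so its spatial gradient $\frac{\partial \xD(\xG(z))}{\partial \xG(z)}$ vanishes everywhere and the whole expression is zero irrespective of $h_3'$. Thus the generator is at rest as soon as $\xD=0$; notice this condition is insensitive to whether $p_\xG=p$, so the selection $p_\xG=p$ must be forced from the discriminator side.

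Next I would treat the discriminator. Substituting $\xD(x)=0$ and $p_\xG=p$ into the controlled flow yields
\begin{align}
p(x)\,h_1'(0) + p(x)\,h_2'(0) - \lambda\cdot 0 = p(x)\big(h_1'(0) + h_2'(0)\big),\nonumber
\end{align}
where the regularization term drops out precisely because it is proportional to $\xD$. It then remains to invoke the standing assumption that the discriminator's equilibrium is the zero function (stated when $h_1,h_3$ are taken increasing and $h_2$ decreasing around zero), which is exactly the requirement $h_1'(0)+h_2'(0)=0$; this holds for every objective in \tabl{summarize_dynamic} (e.g.\ WGAN with $h_1'(0)=1,\,h_2'(0)=-1$, and vanilla GAN with $h_1'(0)=\tfrac12,\,h_2'(0)=-\tfrac12$). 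Hence the discriminator flow vanishes as well, completing the verification. Because the CLC modification $-\lambda\xD(x)$ is identically zero at $\xD=0$, the right-hand sides of CLC-GAN and of the original GAN agree at this configuration, so the desirable equilibrium of the original GAN is an equilibrium of CLC-GAN, establishing the claim.

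The hard part will be upgrading this verification to the full ``same equilibrium'' statement, i.e.\ ruling out that the $-\lambda\xD$ term introduces spurious equilibria with $\xD\not\equiv 0$. For this I would fix $p_\xG=p$ and analyze the pointwise stationarity condition $p(x)\big(h_1'(u)+h_2'(u)\big)=\lambda u$ in the scalar unknown $u=\xD(x)$; using that the monotonicity assumptions make $h_1'+h_2'$ non-increasing with a zero at the origin while $\lambda u$ is strictly increasing for $\lambda>0$, the two curves cross only at $u=0$, which pins down $\xD\equiv 0$ uniquely. The complementary ingredient — that the generator flow forces $p_\xG=p$ rather than merely $\frac{\partial\xD}{\partial\xG}=0$ — I would import from the standard non-parametric GAN analysis \citep{goodfellow2014generative}, since the generator's equilibrium structure and the optimal-discriminator argument are left completely intact by the regularizer. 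I expect the monotonicity/uniqueness step to be the only place where the specific form of the $h_i$ matters, and hence the main obstacle to a fully rigorous statement.
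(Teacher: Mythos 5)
Your proposal is correct for what the lemma actually asserts, and its core rests on the same observation as the paper's proof, but the framing is genuinely different. The paper argues at the level of the optimization objectives: citing the non-parametric analysis of \citet{goodfellow2014generative}, it notes that the unregularized minimax problem has its equilibrium at $p_\xG = p$, $\xD \equiv 0$, and that the added regularizer $\uR_t(\xD)=\int_{x\in\mathcal{X}} p_u^t(x)\xD^2(x)dx$ attains its optimum at exactly $\xD \equiv 0$; since the penalty's optimum coincides with the original equilibrium discriminator, adding it cannot move the equilibrium. You instead argue at the level of the gradient flows: you substitute $(p_\xG,\xD)=(p,0)$ into the controlled dynamics, check that the generator's flow vanishes because $\partial \xD/\partial x \equiv 0$ for the zero function, and check that the discriminator's flow vanishes because the feedback term $-\lambda\xD$ is zero there and $h_1^\prime(0)+h_2^\prime(0)=0$ (this cancellation is exactly the paper's Assumption 3 in Appendix C, and you correctly identify it as the quantitative content of ``the equilibrium discriminator is the zero function''). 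Both proofs hinge on the CLC term being centered at the original equilibrium discriminator; yours makes the mechanism explicit and stays in the dynamical-systems language of the rest of the paper, while the paper's version is shorter and transfers verbatim to any GAN whose equilibrium discriminator is the zero function.

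Two caveats on the portion where you go beyond the paper. First, in your uniqueness step you derive that $h_1^\prime+h_2^\prime$ is non-increasing ``from the monotonicity assumptions''; monotonicity of $h_1$ and $h_2$ only fixes the signs of $h_1^\prime$ and $h_2^\prime$, and non-increasingness of their sum needs a concavity-type condition $h_1^{\prime\prime}+h_2^{\prime\prime}\le 0$ --- true for every objective in \tabl{summarize_dynamic}, but an additional hypothesis, as you yourself flag. Second, importing ``the generator flow forces $p_\xG=p$'' from \citet{goodfellow2014generative} is not a clean import: in the flow picture the generator's dynamic vanishes whenever $\partial\xD/\partial x=0$ on the support of $p_\xG$, which does not require $p_\xG=p$; Goodfellow's argument is a best-response (Nash) argument, not a stationarity argument, so the converse direction would need more work. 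Neither caveat undermines the lemma as stated, because the paper's own proof establishes --- and only needs --- that $(p_\xG,\xD)=(p,0)$ remains an equilibrium after adding the CLC regularizer.
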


Here we follow the identical assumption as in~\citet{goodfellow2014generative}.
Further, under mild assumptions as in~\citet{mescheder2018training}, CLC-GAN locally converges to the equilibrium, as summarized in Theorem~1.

\begin{theorem}(Proof in Appendix C)
Under the Assumptions 1, 2 and 3 in Appendix C with sufficient small learning rate and large $\lambda$, the parameters of CLC-GAN locally converge to the equilibrium with alternative gradient descent.
\end{theorem}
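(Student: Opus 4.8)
The plan is to cast the alternating-gradient-descent update of CLC-GAN as a discrete-time dynamical system, linearize it at the equilibrium point $(p_\xG, \xD) = (p, 0)$ established in Lemma~1, and show that for sufficiently large $\lambda$ the spectral radius of the Jacobian of the update map is strictly less than one. This is the standard route taken by \citet{mescheder2018training}: local convergence of a fixed-point iteration follows once one verifies that every eigenvalue of the linearized update operator lies strictly inside the unit circle. I would first set up the joint update in the parameter space of $\xG$ and $\xD$ and write the simultaneous/alternating gradient map $F$; the equilibrium is a fixed point of $F$ by Lemma~1 together with the CLC regularizer not perturbing the stationarity condition (as argued in \secref{implementation}).

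Next I would compute the Jacobian $J = \partial F / \partial(\ttheta,\tphi)$ at the equilibrium. The key structural observation, already foreshadowed in \secref{ana_imp_stability} and in \eqn{dynamic_con_D}, is that the CLC term contributes an extra $-\lambda\,\xD(x)$ to the discriminator dynamics, so after linearization the discriminator block of the continuous-time Jacobian picks up a term $-\lambda I$ on its diagonal. In the continuous-time picture the uncontrolled system has purely imaginary eigenvalues $\pm i$ (the oscillatory poles found for WGAN in \eqn{dynamic_frequency_parameter}); adding $-\lambda I$ shifts these to the left half-plane, giving eigenvalues with strictly negative real part whenever $\lambda > 0$, exactly matching the stabilized transfer function $\fT_{c\xD}(s) = s/(s^2 + \lambda s + 1)$. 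For the discrete-time alternating update I would then invoke the correspondence between negative-real-part eigenvalues of the continuous system and inside-the-unit-circle eigenvalues of the discrete map for small learning rate $\eta$: writing the one-step map as $I + \eta J + O(\eta^2)$, an eigenvalue $\mu$ of $J$ with $\operatorname{Re}(\mu) < 0$ yields a discrete eigenvalue $1 + \eta\mu$ of magnitude $1 + \eta\operatorname{Re}(\mu) + O(\eta^2) < 1$ for $\eta$ small enough. Assumptions 1--3 (which, following \citet{mescheder2018training}, should guarantee that the relevant Jacobian blocks are well-defined, that the null directions corresponding to reparametrizations are handled, and that the second-order terms of the $h_i$ behave as in the local-linearization computation of \eqn{local_linear}) are what make this linearization legitimate.

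The main obstacle I anticipate is twofold. First, one must carefully treat the alternating (rather than simultaneous) nature of the updates: the discriminator step and the generator step compose, so the true one-step Jacobian is a product of two block-triangular factors rather than $I + \eta J$, and I would need to show this product still has spectral radius below one — typically by expanding to first order in $\eta$ and checking that the alternating scheme's extra off-diagonal coupling does not destroy the sign of the real parts. Second, and more delicate, is the passage from the clean continuous-time/function-space analysis to the genuinely parametric neural-network setting: the argument requires that the parametrization does not introduce zero or positive-real-part eigenvalues along directions that preserve $p_\xG = p$, which is precisely the role of the regularity conditions in Assumption~1--3 and the reason ``large $\lambda$'' is needed to dominate any residual destabilizing curvature terms coming from the nonlinearity of $h_1, h_2, h_3$ away from the exactly-linear WGAN case. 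I would therefore devote the bulk of the proof to bounding these curvature contributions and showing they are overwhelmed once $\lambda$ exceeds a threshold determined by the Hessians of the $h_i$ at the equilibrium.
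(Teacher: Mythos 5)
Your proposal follows essentially the same approach as the paper's Appendix~C proof: linearize the dynamics at $(\phi^*,\theta^*)$, show that the CLC regularizer contributes a negative semi-definite term only to the discriminator block of the Jacobian (in parameter space this is the Gram matrix $-\lambda\,\ep_{p}[\partial_\phi\xD\,\partial_\phi\xD^{\top}]$ rather than $-\lambda I$, with Assumption~2 supplying definiteness transverse to the equilibrium manifolds exactly as you anticipate, and Assumption~3 supplying both the skew cross-coupling $J_{DG}=-J_{GD}^{\top}$ and the curvature threshold $\lambda>-h_1''(0)-h_2''(0)$ that you correctly identify as the reason ``large $\lambda$'' is needed), and then conclude local convergence under small step sizes. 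The only notable difference is that the paper does not redo the spectral-radius and alternating-update analysis you flag as your main obstacle: after verifying the Jacobian structure ($J_{GG}=0$, skew off-diagonal blocks, negative semi-definite shift to $J_{DD}$), it invokes the proof of Theorem~4.1 of \citet{mescheder2018training} wholesale, which already covers alternating gradient descent with sufficiently small learning rate.
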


We provide the experimental results in \secref{experiment} to empirical validate our method.

\section{Related Work}\label{sec:related_work}


Some recent work directly models the training process of GANs.
\citet{mescheder2017numerics} and \citet{nagarajan2017gradient} model the dynamics of GANs in the parameter space and stabilize the training dynamics using gradient-based regularization. However, the above methods do not model the whole training dynamics explicitly and cannot generalize to natural images.
Then \citet{mescheder2018training} propose a prototypical example Dirac GAN to understand GANs' training dynamics and stabilize GANs using simplified gradient penalties. 
\citet{gidel2018negative} analyze the effect of momentum based on the Dirac GAN and propose the negative momentum.
Though the above methods provide an elegant understanding of the training dynamics, this understanding does not provide a practically effective algorithm to stabilize nonlinear GANs' training and they fail to report competitive results to the state-of-the-art~(SOTA) methods~\cite{miyato2018spectral}. 
Instead, we revisits the Dirac GAN from the perspective of control theory, which provides a set of tools and extensive experience to stabilize it.
Based on the recipe, we advance the previous SOTA results on image generation.

\citet{feizi2017understanding} is another related work that analyzes the stability of GANs using the Lyapunov function, which is a general approach in control theory. However, it only focuses on the stability analysis whereas cannot provide stabilizing methods. In our paper, we are interested in building SOTA GANs in practise and therefore we leverage the classical control theory.


\section{Experiments}\label{sec:experiment}

We now empirically verify our method on the widely-adopted CIFAR10~\citep{krizhevsky2009learning} and CelebA~\citep{liu2015faceattributes} datasets. 
CIFAR10 consists of 50,000 natural images of size $32\times 32$ and CelebA consists of 202,599 face images of size $64\times 64$.
The quantitative results are from the corresponding papers or reproduced on the official code for fair comparison.
Specifically, we use the exactly same architectures for both $\xD$ and $\xG$ with our baseline methods, where the ResNet~\citep{he2016deep} with the ReLU activation~\citep{glorot2011deep} is adopted\footnote{Our code is provided \href{https://github.com/taufikxu/GAN_PID}{HERE}.}.
The batch size is 64, and the buffer size $N_b$ is set to be 100 times of the batch size for all settings. 
We manually select the coefficient $\lambda$ among $\{1, 2, 5, 10, 15, 20\}$ in Reg-GAN's setting and among $\{0.05, 0.1, 0.2, 0.5\}$ in SN-GAN's setting.
We use the Inception Score~(IS)~\citep{salimans2016improved} to evaluate the image quality on CIFAR10 and FID score~\citep{gulrajani2017improved} on both CIFAR10 and CelebA. 
More details about the experimental setting and further results on a synthetic dataset can be found in Appendix~E.

We compare with two typical families of GANs. The first one is referred as unregularized GANs, including WGAN~\cite{arjovsky2017wasserstein}, SGAN~\cite{goodfellow2014generative}, LSGAN~\cite{mao2017least} and Hinge-GAN~\cite{miyato2018spectral}.The second one is referred as reguarlized GANs, including Reg-GAN~\cite{mescheder2018training} and SN-GAN~\cite{miyato2018spectral}.
We emphasize that the regularzied GANs are the previous SOTA methods and our implementations are based on the officially released code. 
For clarity, we refer to our method as CLC-GAN$(\cdot)$ with the hyperparameter $\lambda$ denoted in the parentheses.

In the following, we will demonstrate that (1) the CLC can stabilize GANs using less computational cost than competitive regularizations and is applicable to various objective functions ; (2) CLC-GAN provides a consistent improvement on the quantitative results in different settings compared to related work~\cite{mescheder2018training} and surpasses previous state-of-the-art~(SOTA) GANs~\cite{miyato2018spectral,zhang2019consistency}.

\begin{table}
	\centering
	\caption{The FID Score on CIFAR10. The results reported here are the best results over the training process and are averaged over 3 runs.}
	\begin{tabular}{ccc}
		\toprule
		Method   & \multicolumn{1}{c}{WGAN} & \multicolumn{1}{c}{SGAN} \\
		\midrule
		No Regularization & $105.21$ & $28.51$ \\
		Reg-GAN & $30.43$ & $28.39$ \\
		Gradient Penalty & $28.20$ & $-$ \\
		\midrule
		CLC-GAN(2)& $23.53\pm 1.22$ & $21.63\pm 0.47$ \\
		CLC-GAN(5)& $21.46\pm 1.57$ & $\mathbf{21.52}\pm 0.96$ \\
		CLC-GAN(10)& $\mathbf{21.14}\pm 1.84$ & $22.20\pm 2.07$ \\
		\bottomrule
	\end{tabular}%
	\label{tab:FID_results}
\end{table} 

\setlength{\tabcolsep}{1.mm}{
\begin{table}[t]
	\centering
	\caption{The Inception score on CIFAR10. $\dag$~\cite{yang2017lr}, $\ddag$~\cite{miyato2018spectral}, $\S$~\cite{zhang2019consistency}. Results of CLC-GAN are averages over 3 runs. }
	\begin{tabular}{cccc}
		\toprule
		Method  & WGAN  & SGAN & Hinge \\
		\midrule
		LR-GAN$^\dag$ & - & $7.17$ & - \\
		SN-GAN$^\ddag$ & - & - & $8.22$\\
		CR-GAN$^\S$ & - & $8.40$ & - \\
		\midrule
		Gradient Penalty & $7.82$  & - & - \\
		Reg-GAN   & $7.34$  & $7.37$  &  $7.37$\\
		CLC-GAN(2) & $8.42\pm .06$  & $8.28\pm .05$ & $8.49\pm .08$ \\
		CLC-GAN(5) & $\mathbf{8.49}\pm .07$  & $8.44\pm .08$ & $\mathbf{8.54}\pm .03$\\
		CLC-GAN(10) & $8.38\pm .10$  & $\mathbf{8.47}\pm .09$ & $8.46\pm .00$ \\
		\midrule
		SN-GAN & $3.29$& $8.17$ & $8.28$ \\
		\tabincell{c}{CLC-SN-\\GAN(0.1)} & $8.14\pm .02$ & $8.30\pm .09$ & $\mathbf{8.54} \pm .03$ \\
		\bottomrule
	\end{tabular}%
	\label{tab:is_cifar10}
\end{table}}

\subsection{CLC-GAN is stable}
\label{sec:stability_improvement}

\begin{figure}
	\centering
	\includegraphics[width=0.2\textwidth]{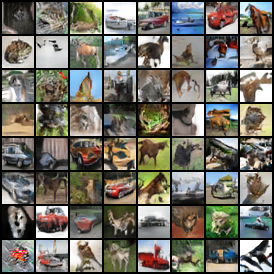}
	\includegraphics[width=0.2\textwidth]{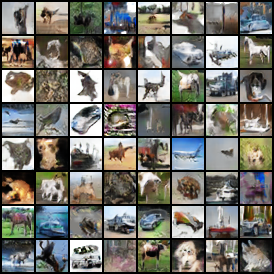}
	\includegraphics[width=0.2\textwidth]{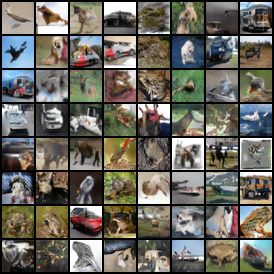}
	\includegraphics[width=0.2\textwidth]{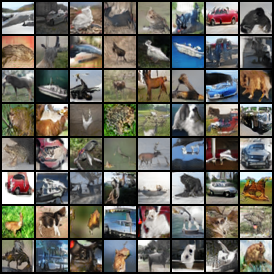}
	\caption{The generated results of CIFAR10 dataset. From top left to bottom right: WGAN-GP, Reg-WGAN, CLC-WGAN(5), CLC-SGAN(5).}
	\label{fig:cifar_generation}
\end{figure}

\begin{figure}[t]
	\centering
	\includegraphics[width=0.2\textwidth]{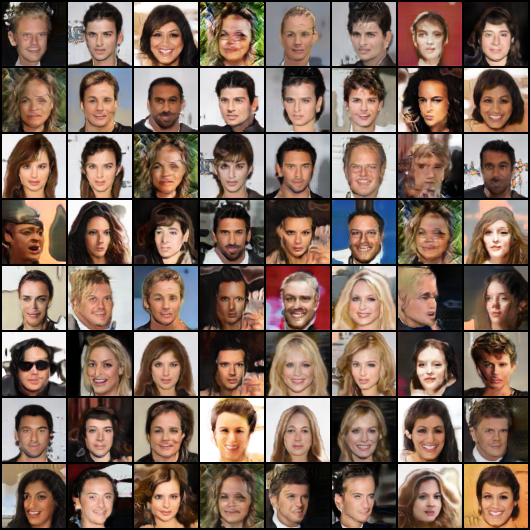}
	\includegraphics[width=0.2\textwidth]{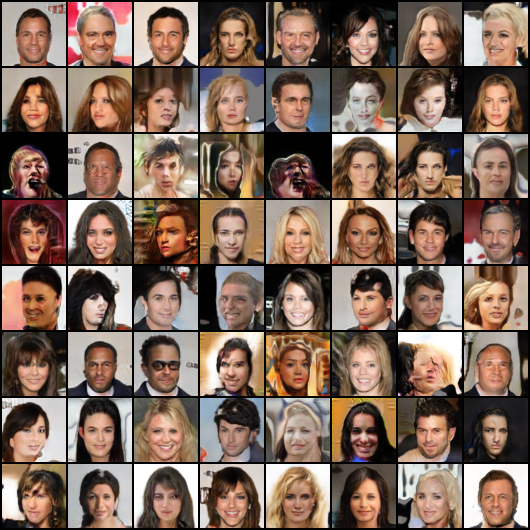}
	\includegraphics[width=0.2\textwidth]{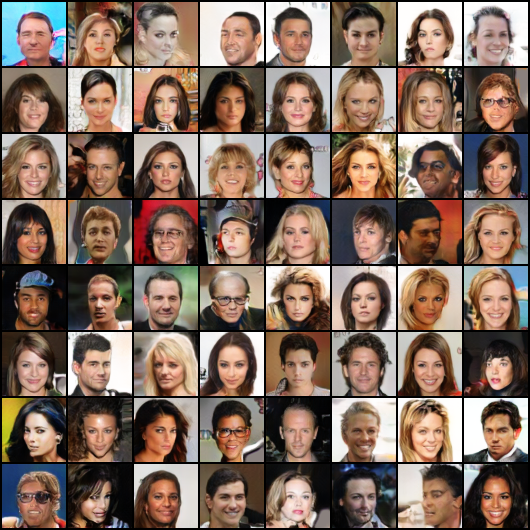}
	\includegraphics[width=0.2\textwidth]{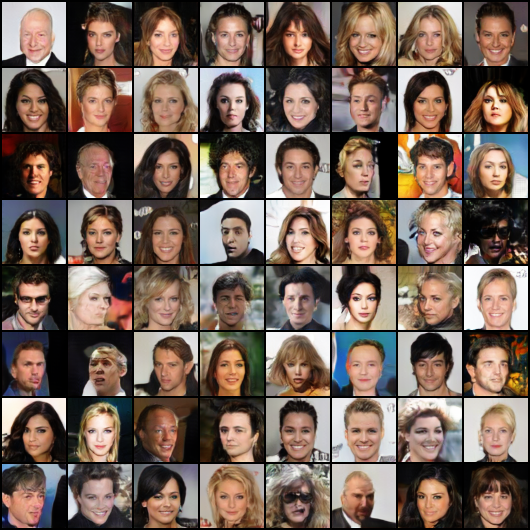}
	\caption{The generated results of CelebA dataset. From top left to bottom right: WGAN-GP, Reg-WGAN, CLC-WGAN(15), CLC-SGAN(15).}
	\label{fig:celeba_generation}
\end{figure}

In the linear case, the simulated results in \fig{simulated_diracgan} demonstrate that CLC-GAN can stabilize the Dirac GAN, which agrees with our theoretical analysis in \secref{ana_imp_stability}. 

In normal GANs, we compare CLC-GAN with a wide range of GANs~\cite{arjovsky2017wasserstein,goodfellow2014generative,mao2017least,miyato2018spectral} and their regularized version in \cite{mescheder2018training} in terms of training stability qualitatively.
The learning curves are shown in \fig{learning_curve}. The top panel shows the IS on CIFAR10 and the bottom one shows FID on CelebA.

In both panels, the training dynamics of unregularized GANs are not stable. On CIFAR10, the unregularized GANs all diverge from the data distribution and on CelebA they even diverge at the very beginning.
Indeed, their FID results on CelebA are over $300$ which is too large to be shown in the figure. 
Among unregularized GANs, LSGAN and SGAN are more stable than WGAN on CIFAR10 which is consistent to our analysis in \tabl{summarize_dynamic}.
However, none of them provide converged results, nor can they generalize to larger images in CelebA.
We hypothesize that the nonlinearity in neural networks is the main reason for the divergence behaviour.
Instead, CLC-GAN can succetssfully avoid the oscillatory behaviour and regularize GANs towards the data distribution. 
The robustness of CLC-GAN in the nonlinear dynamics agrees with the theoretical analysis in \tabl{summarize_dynamic} and the experience in control theory, which are the main motivations of our paper.
In conclusion, the comparison between the unregularized GANs and their controlled versions show the effectiveness of the proposed method.

Indeed, Reg-GAN can also stabilize the training dynamics.
In comparison, the CLC-GANs are computationally efficient and achieve better results after convergence.
First, unlike the gradient penalty which implies a non-trivial running time~\cite{kurach2018large}, CLC-GANs directly regularize the activation of $\xD$ and require less computational cost.
For instance, our method can conduct approximate $8$ iterations per second of training on CelebA whereas Reg-GAN can only conduct $4$ iterations per second on Geforce 1080Ti. 
Second, CLC-GANs provide higher IS on CIFAR10 and lower FID on CelebA as qualitatively shown in the learning curves. The quantitative results are summarized in the following subsection.

\fig{cifar_generation}~\&~\fig {celeba_generation} show the generated samples. Those from CLC-GAN are semantically meaningful in all setting and are at least competitive to the ones from very strong baselines.

\subsection{Quantitative Results}
\label{sec:quantative_results}

We now present the quantitative results on CIFAR10 in the settings that include different objective functions, neural network architectures and the values of $\lambda$. The IS and FID are shown in \tabl{is_cifar10} and \tabl{FID_results} respectively. The comparisons among different settings are given within the tables.


First, our method provides a consistent improvements on both IS and FID on CIFAR10. For FID, CLC-GANs decrease it from $28$ to $23$ compared to Reg-GAN. 
For IS, CLC-GANs surpass previous SOTA GANs. Specifically, CLC-GANs achieve IS over $8.45$ with various objectives without using spectral normalization, which is a significant improvement compare to related works, including SN-GAN~\cite{miyato2018spectral} and CR-GAN~\cite{zhang2019consistency}. 

Second, CLC-GAN is also applicable to SN-GAN's architecture and improve its performance, whereas most gradient-based regularizations fail to introduce significant improvement~\cite{kurach2018large}.
Unlike SN-GAN whose performance largely depends on the objective functions, CLC-SN-GAN provides stable training dynamics consistently. 

Finally, CLC-GAN is not very sensitive to the hyperparameter $\lambda$ given the normalization used in $\xD$. When batch normalization is adopted, CLC-GANs with $\lambda=2, 5, 10$ all achieve SOTA IS and a large improvement on FID. When spectral normalization~\cite{miyato2018spectral} is used, a relatively smaller $\lambda$ is required. 
Besides the reported results with $\lambda=0.1$, CLC-SN-GANs with $\lambda\in\{0.05, 0.2\}$ achieves IS over $8.4$ consistently using Hinge loss.
The underlying mechanism of the difference between the two types of normalizations is unclear. We hypothesize that it is because $\xD$ is a Lipschitz-1 function with spectral normalization.

\section{Conclusions and Discussions}

In this paper, we propose a novel perspective to understand the dynamics of GANs and a stabilizing method called CLC-GAN. We model the dynamics of the Dirac GAN with linear objectives theoretically in the frequency domain and extend the analysis to nonlinear objectives using local linearization.
By leveraging the recipe for control theory, we propose a stabilizing method called CLC to improve Dirac GAN's stability and generalize CLC to normal GANs. The simulated results on Dirac GAN and empirical results on normal GANs demonstrate that our method can stabilize a wide range of GANs and provide better convergence results.

Although CLC-GAN provides promising results, further analyses can be done to achieve better results. On one hand, our analysis mainly focuses on the continuous cases, where the practical implementation optimizes both $\xG$ and $\xD$ in discrete time steps. In this case, the $Z$-transform is a better tool than LT used in this paper. On the other hand, we approximate the dynamics in the function space using the update in the parameter space, which can be improved by recent analyses of GANs in the function space~\citep{johnson2018composite}.
Finally, modern control theory and non-linear control methods~\citep{khalil2002nonlinear} can potentially help GANs to achieve better performance. These are promising directions for the future work.

\section*{Acknowledgements}
This work was supported by the National Key Research and Development Program of China (No. 2017YFA0700904), NSFC Projects (Nos. 61620106010, U19B2034, U181146), Beijing NSF Project (No. L172037), Beijing Academy of Artificial Intelligence, Tsinghua-Huawei Joint Research Program, Tiangong Institute for Intelligent Computing, and NVIDIA NVAIL Program with GPU/DGX Acceleration. C. Li was supported by the Chinese postdoctoral innovative talent support program and Shuimu Tsinghua Scholar.

\bibliographystyle{icml2020}
\bibliography{references}

\appendix

\section{Dynamics for different GANs.}

In this section, we apply the local linearization technique to Dirac GANs with various objective functions, including vanilla GAN, non-saturation GAN~\cite{goodfellow2014generative}, LS-GAN~\cite{mao2017least} and Hinge-GAN~\cite{miyato2018spectral}. Following the notations in the main body, the training dynamics of general Dirac GANs are given by:
\begin{align}
    \frac{d\tphi(t)}{dt} &= \frac{\partial \uV_1(\phi;\theta)}{\partial \phi}|_{\phi=\tphi(t), \theta=\ttheta(t)} \\
    & = h_1^\prime(\tphi(t) c)c - h_2^\prime(\tphi(t)\ttheta(t))\ttheta(t), \\
    \frac{d\ttheta(t)}{dt} &= \frac{\partial \uV_2(\theta;\phi)}{\partial \theta}|_{\phi=\tphi(t), \theta=\ttheta(t)} \\
     &= h_3^\prime(\tphi(t)\ttheta(t))\tphi(t).
\end{align}
By applying the local linearization technique to both $\phi$ and $\theta$ around the equilibrium point $(\tphi_c, \ttheta_c) = (0, c)$, the dynamic can be approximated as:
\begin{align}
    \begin{bmatrix} \frac{d\tphi(t)}{dt} \\ \frac{d\ttheta(t)}{dt} \end{bmatrix} &\approx  \begin{bmatrix}\frac{\partial^2 \uV_1(\phi;\theta)}{\partial\phi^2} & \frac{\partial^2 \uV_1(\phi;\theta)}{\partial \theta\partial \phi} \\
    \frac{\partial^2 \uV_2(\phi;\theta)}{\partial \phi\partial\theta} & \frac{\partial^2 \uV_2(\phi;\theta)}{\partial\theta^2}
    \end{bmatrix}\begin{bmatrix}\tphi(t) - \tphi_e \\ \ttheta(t) - \ttheta_e\end{bmatrix} \\
    &= T \begin{bmatrix}\tphi(t) - \tphi_e \\ \ttheta(t) - \ttheta_e\end{bmatrix} = T \begin{bmatrix}\tphi(t)\\ \ttheta(t) - c\end{bmatrix},
\end{align}
and $T$ can be denoted as:
\begin{align}
    T = \begin{bmatrix}h_1^{\prime\prime}(\phi c)c^2 + h_2^{\prime\prime}(\phi\theta)\theta^2 & h_2^\prime(\phi\theta) + h_2^{\prime\prime}(\phi\theta)\theta\phi \\
    h_3^\prime(\phi\theta) + h_3^{\prime\prime}(\phi\theta)\phi\theta & h_3^{\prime\prime}(\phi\theta)\phi^2
    \end{bmatrix}.\nonumber
\end{align}
Here $h_i^{\prime\prime}(x)$ is the second order derivative of $h_i(x)$ for $i\in\{1,2,3\}$.
Below we assume $c=1$ and start the case by case analysis for various types of GANs.

\subsection{Vanilla GAN}
In vanilla GAN, we have:
\begin{align}
    h_1(x) = \log(\sigma(x)), \\
    h_2(x) = \log(1-\sigma(x)), \\
    h_3(x) = -\log(1-\sigma(x)),
\end{align}
where $\sigma(\cdot)$ denotes the sigmoid function.
Then we have:
\begin{align}
    &h_1^\prime(x) = (1 - \sigma(x)), h_1^{\prime\prime}(x) = -\sigma(x)(1-\sigma(x)),\\
    &h_2^\prime(x) = -\sigma(x), h_2^{\prime\prime}(x) = -\sigma(x)(1-\sigma(x)), \\
    &h_3^\prime(x) = \sigma(x), h_3^{\prime\prime}(x) = \sigma(x)(1-\sigma(x)).
\end{align}
and for $T$:
\begin{align}
    T = \begin{bmatrix}-\frac{1}{2} & -\frac{1}{2} \\ \frac{1}{2} & 0 \end{bmatrix}.
\end{align}
It indicates that 
\begin{align}
    \fPhi(s) = -\frac{1}{2s+1} (\fTheta(s) - \fC(s)) \\
    \fTheta(s) = \frac{1}{2s}\fPhi(s).
\end{align}
Then we can solve the dynamics of vanilla GAN as:
\begin{align}
\begin{cases}
\fPhi(s) = \frac{2s}{4s^2 + 2s + 1}\fC(s), \\
\fTheta(s) = \frac{1}{4s^2 + 2s + 1}\fC(s).
\end{cases}
\end{align}

\subsection{Non-saturation GAN}
Non-saturation GAN~(NS-GAN) shares the same equilibrium point and the objective function for the discriminator. It modifies $h_3$ as $h_3(x) = \log(\sigma(x))$ and we have:
\begin{align}
   h_3^\prime(x) = (1 - \sigma(x)), h_3^{\prime\prime}(x) = -\sigma(x)(1-\sigma(x)).
\end{align}
By substituting the above equation to $T$, the dynamic of NS-GAN is equivalent to vanilla GAN and therefore shares the same transfer function.

\subsection{Hinge GAN}
For Hinge GAN, we have:
\begin{align}
    h_1(x) = \min\{-1+x, 0\}, \\
    h_2(x) = \min\{-1-x, 0\}, \\
    h_3(x) = x.
\end{align}
Then we have:
\begin{align}
    h_1^\prime(x) = 1, h_1^{\prime\prime}(x) = 0, \\
    h_2^\prime(x) = -1, h_2^{\prime\prime}(x) = 0, \\
    h_3^\prime(x) = 1. h_3^{\prime\prime}(x) = 0.
\end{align}
Therefore the Hinge GAN actually shares the same dynamics as WGAN around the equilibrium point.

\subsection{Least Square GAN}

The objective function of least square GAN~(LS-GAN) is:
\begin{align}
    \uV_1(\phi; \theta) = - (\phi c - 1)^2 - (\phi\theta)^2, \\
    \uV_2(\theta; \phi) = -(\phi\theta)^2.
\end{align}
In this case, there's no equilibrium point. We modify the discriminator as $D(x) = \phi x + 0.5$ which is equilvalent to convert the objective functions as follows: 
\begin{align}
    h_1(x) = -(x - 0.5)^2, \\
    h_2(x) = -(x + 0.5)^2, \\
    h_3(x) = -(x-0.5)^2,
\end{align}
and therefore we have:
\begin{align}
    &h_1^\prime(x) = -2(x - 0.5), h_1^{\prime\prime}(x) = -2,\\
    &h_2^\prime(x) = -2(x + 0.5), h_2^{\prime\prime}(x) = -2, \\
    &h_3^\prime(x) = -2(x - 0.5), h_3^{\prime\prime}(x) = -2.
\end{align}
Then the $T$ can be denoted as:
\begin{align}
    T = \begin{bmatrix}-4 & -1 \\ 1 & 0 \end{bmatrix}.
\end{align}
We have:
\begin{align}
    &\fPhi(s) = -\frac{1}{s+4}(\fTheta(s) - \fC(s)), \\
    &\fTheta(s) = \frac{1}{s}\fPhi(s).
\end{align}
Then we can solve the dynamics of LSGAN as:
\begin{align}
\begin{cases}
\fPhi(s) = \frac{s}{s^2 + 4s + 1}\fC(s), \\
\fTheta(s) = \frac{1}{s^2 + 4s + 1}\fC(s).
\end{cases}
\end{align}

\section{Dynamics with Lipschitz Continuity}

In this section, we prove that around the equilibrium, the dynamics of regularized $\xD$ with Lipschitz constraint is equivalent to the unregularized $\xD$ as in Eqn. (20). 
With the dynamics defined by the corresponding gradient flow, we only need to prove that updating $\xD$ according to Eqn. (20) will not violate the Lipschitz constraints, at least locally around the equilibrium. Here we make the following assumptions:
\begin{enumerate}
    \item Both $p(x)$ and $p_\xG(t, x)$ are $C^1$-smooth: $\frac{dp(x)}{dx}$ and $\frac{dp_\xG(t, x)}{dx}$ exists and is continuous $\forall~t$.
    \item $q(x)\to 0$ and $\frac{dq(x)}{dx}\to 0$ when $x\to 0$ for $q\in\{p, p_\xG\}$.
    \item There exists an $M$ such that $|\frac{dq(x)}{dx}|_2 < M$ for $q\in\{p, p_\xG\}$.
\end{enumerate}
The above assumptions are satisfied for most probability density functions.

The distance in the function space is defined as $d(p_1, p_2) = \sup_{x\in\Real^n} |p_1(x) - p_2(x)|$ which always exists because of the 2-nd conditions above. We define $\Omega_{L}=\{p(x)|p(x)\in C^1, |\frac{dp(x)}{dx}|_2 < L~\forall x.\}$ and $B(\epsilon)=\{p(x)|p(x)\in C^1, \sup_{x} |p(x)| < \epsilon\}$.
Then we have the follow proposition:

\begin{proposition}
There exists $\eta>0$, such that $\forall \xD(x)\in \Omega_{0.5}$, we have $\xD(x)+\eta(p(x) - p_\xG(x)) \in \Omega_{1}$. 
\end{proposition}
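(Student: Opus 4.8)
The plan is to reduce the claim to a single pointwise estimate on the gradient. Membership in $\Omega_1$ requires two things: that the candidate function $\xD(x) + \eta(p(x) - p_\xG(x))$ is $C^1$, and that its gradient has Euclidean norm strictly below $1$ at every $x$. The first is immediate: by Assumption 1 both $p$ and $p_\xG$ are $C^1$, and $\xD \in \Omega_{0.5}$ is $C^1$ by the definition of $\Omega_{0.5}$, so any finite linear combination is again $C^1$. Thus the entire content of the proposition is the uniform gradient bound, and I would devote the argument to that.

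For the gradient bound I would first differentiate (using linearity) and apply the triangle inequality. For every $x$,
\begin{align}
\left| \frac{d}{dx}\big(\xD(x) + \eta(p(x) - p_\xG(x))\big) \right|_2 \le \left|\frac{d\xD(x)}{dx}\right|_2 + \eta\left|\frac{dp(x)}{dx} - \frac{dp_\xG(x)}{dx}\right|_2.
\end{align}
The first term is controlled by the hypothesis $\xD \in \Omega_{0.5}$, which gives $|\frac{d\xD(x)}{dx}|_2 < \tfrac12$ for every $x$. For the second term I would invoke Assumption 3: since $|\frac{dp(x)}{dx}|_2 < M$ and $|\frac{dp_\xG(x)}{dx}|_2 < M$ uniformly in $x$, a further triangle inequality yields $|\frac{dp(x)}{dx} - \frac{dp_\xG(x)}{dx}|_2 < 2M$. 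Combining, the right-hand side is strictly below $\tfrac12 + 2M\eta$ for every $x$.

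It then remains to fix $\eta$ once and for all. Choosing $\eta = \tfrac{1}{4M}$ (indeed any $\eta \le \tfrac{1}{4M}$) makes $2M\eta \le \tfrac12$, so the bound becomes $< \tfrac12 + \tfrac12 = 1$ at every $x$; the strictness is inherited from the strict bound $<\tfrac12$ supplied by $\xD \in \Omega_{0.5}$. Hence $\xD(x) + \eta(p(x) - p_\xG(x)) \in \Omega_1$, which is the assertion.

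The only real subtlety—and the step I would flag as load-bearing—is the \emph{uniformity} of the construction: the same $\eta$ must work for all $x \in \Real^n$ and for every $\xD \in \Omega_{0.5}$ simultaneously. This is exactly what Assumption 3 buys, by replacing the pointwise control of $\frac{dp}{dx}$ and $\frac{dp_\xG}{dx}$ with a single global constant $M$; without it one could not preclude the perturbation's slope blowing up in the tails, and no fixed $\eta$ would suffice. Note that Assumption 2 is not needed for this proposition (it serves instead to make the $\sup$-distance on the function space well defined), and that the constant $2M$ is deliberately crude—the sharp choice of $\eta$ would use the actual supremum of $|\frac{dp}{dx} - \frac{dp_\xG}{dx}|_2$—but the simple bound already closes the argument.
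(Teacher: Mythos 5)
Your proof is correct and takes essentially the same route as the paper's: the same triangle-inequality decomposition of the gradient, the same appeal to the uniform bound $M$ from Assumption 3 on both density gradients, and the same choice $\eta = \frac{1}{4M}$. If anything, your bookkeeping is slightly more careful than the paper's, which with $\eta = \frac{1}{4M}$ claims a bound of $0.75$ when the computation actually gives $\tfrac12 + 2M\eta = 1$, so that membership in $\Omega_1$ rests on the strictness of the inequalities---precisely the point you flag.
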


\begin{proof}
By denoting $\xD^\prime(x) = \xD(x)+\eta(p(x) - p_\xG(x))$, We have:
\begin{align}
    \frac{d(\xD(x)+\eta(p(x) - p_\xG(x)))}{dx} \\
    = \frac{d\xD(x)}{dx} + \eta(\frac{p(x)}{dx} - \frac{p_\xG(x)}{dx}).\nonumber
\end{align}
Therefore, we have 
\begin{align}
    &|\frac{d(\xD(x)+\eta(p(x) - p_\xG(x)))}{dx}|_2 \\
    \leq &|\frac{dD(x)}{dx}|_2 + \eta(|\frac{p(x)}{dx}|_2 + |\frac{p_\xG(x)}{dx})|_2 \\
    \leq &0.5 + \eta(M+M).
\end{align}
By letting $\eta=\frac{1}{4M}$, we have $|\frac{d(\xD^\prime)}{dx}|_2\leq 0.75$. Therefore we have $\xD^\prime(x)\in \Omega_{1}$.
\end{proof}

The above proposition indicates that when $\xD(x)$ is sufficient close to the equilibrium and the learning rate is sufficient small, then the dynamics of $\xD$ still follows Eqn.~(11) for Dirac GAN and Eqn.~(22) for normal GANs. The simulated results of Dirac GAN in Fig.~1 and the bad performance of SN-GAN with WGAN's objective in Sec.~6.2 agree with this argument.

\section{Theoretical Analysis of CLC-GAN}

\subsection{Proof of Lemma 1}

Under the non-parametric setting following~\citet{goodfellow2014generative}, the equilibrium of GAN's minimax problem is achieved when $p_\xG=p$ and $\xD(x) = 0$ for all $x$. Besides, for the regularization term introduced by CLC-GAN:
\begin{align}
    \uR_t(\xD) &= \int_{x\in\mathcal{X}} p_u^t(x)\xD^2(x)dx,
    \label{eqn:clc-reg}
\end{align}
it also achieves optimum when $\xD(x) = 0$ for all $x$. Therefore, regularizing $\xD$'s training dynamic with \eqn{clc-reg} will not change the equilibrium of GANs. This argument for other variants of GANs remains the same under the condition that the equilibrium of unregularized GANs' minimax problem is achieved when $\xD(x) = 0$ for all $x$ around the data distribution. This assumption is meet by most variants of GANs~\cite{arjovsky2017wasserstein,mao2017least,miyato2018spectral,zhang2019consistency}.

\subsection{Proof of Theorem 1}

In this subsection, we provide the proof of Theorem 1, whose proof procedure mainly follows~\citet{mescheder2018training}. We first denote that $\xG_\theta$ is the generator parameterized by $\theta\in \Real^{|\theta|}$ and $\xD_{\phi}$ is the discriminator parameterized by $\phi\in \Real^{|\phi|}$.
Before going to the proof of Theorem 1, we first provide the assumptions we made, which are similar to~\cite{mescheder2018training}.

We first assume the data distribution can be captured by the generator $\xG$, which is identical to the Assumption I in~\cite{mescheder2018training} as:
\begin{assumption}
There exists a discriminator $\xD^*$ parameterized by $\phi^*$ and a generator $\xG^*$ parameterized by $\theta^*$, such that $p_{\xG^*}(x) = p(x)$ for all $x$ and $\xD^*(x) = 0$ in some local neighbourhood of the support of the data distribution $p(x)$.
\label{ass1}
\end{assumption}

Besides, we define 
\begin{align}
    h(\phi) = \ep_{p(x)}[\xD_\phi^2(x)],
\end{align}
Then we define a manifold over the parameter space of $\phi$ and $\theta$ as follows:
\begin{align}
    &\mathcal{M}_\theta = \{\theta| p_{\xG_{\theta}}(x) = p(x),~\forall x\},\\
    &\mathcal{M}_\phi = \{\phi| h(\phi) = 0\}.
\end{align}
Here we use $\xG_\theta$ to denote the generator parameterized by $\theta$. To state the second assumption, we need
\begin{align}
    g(\theta) = \ep_{p_{\xG_\theta}(x)}[\partial_\phi \xD_\phi(x)|_{\phi=\phi^*}].
\end{align}
Then we provide the second assumption by follow~\citet{mescheder2018training} as follows:
\begin{assumption}
There are $\epsilon$-balls around $B_\epsilon(\phi^*)$ and $B_\epsilon(\theta^*)$ around $\phi^*$ and $\theta^*$, such that $\mathcal{M}_\phi\cap B_\epsilon(\phi^*)$ and $\mathcal{M}_\theta\cap B_\epsilon(\theta^*)$ define $C^1$-manifolds. Moreover, the following conditions hold:
\begin{itemize}
    \item if $v\in \Real^{|\phi|}$ is not in the tangent space of $\mathcal{M}_\phi$ at $\phi^*$, then we have $\partial^2_v h(\phi^*) \neq 0$.
    \item if $w\in \Real^{|\theta|}$ is not in the tangent space of $\mathcal{M}_\theta$ at $\theta^*$, then we have $\partial_w g(\theta^*) \neq 0$.
\end{itemize}
\label{ass2}
\end{assumption}

The validness of GAN's training dynamics requires the following assumption:
\begin{assumption}
	The functions $h_1$, $h_2$, $h_3$ requires the following conditions:
	\begin{itemize}
		\item $h_1^\prime(0) > 0$, $h_2^\prime(0) < 0$ and $h_3^\prime(0) > 0$.
		\item $|h_1^\prime(0)| = |h_2^\prime(0)| = |h_3^\prime(0)|$.
	\end{itemize}
	Here $|\cdot|$ denotes the absolute value.
	\label{ass3}
\end{assumption}

The formal statement of Theorem 1 is given as follows:
\setcounter{theorem}{0}
\begin{theorem}
	Assume the assumptions \ref{ass1}, \ref{ass2} and \ref{ass3} hold for $\phi^*$ and $\theta^*$. For small enough learning rate and $\lambda>-h_1^{\prime\prime}(0)-h_2^{\prime\prime}(0), \lambda>0$, training GANs with objectives formulated in Eqn. (19) and the regularization term in Eqn. (25) ensures locally convergence with alternative gradient descent.
\end{theorem}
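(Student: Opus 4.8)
The plan is to follow the linearization-and-spectral-analysis strategy of \citet{mescheder2018training}, adapted to absorb the extra CLC regularizer. First I would write the alternating gradient-descent scheme as a discrete dynamical system on the joint parameter $(\phi,\theta)$, whose fixed points coincide with the equilibrium set of Lemma~1 (using Assumption~\ref{ass1} to guarantee a pair $(\phi^*,\theta^*)$ with $p_{\xG_{\theta^*}}=p$ and $\xD_{\phi^*}=0$ near the support). The regularizer modifies only the discriminator step, replacing the update direction by $\nabla_\phi\uV_1-\lambda\nabla_\phi h(\phi)$ with $h(\phi)=\ep_{p(x)}[\xD_\phi^2(x)]$; since $\xD_{\phi^*}\equiv0$ on the support we get $\nabla_\phi h(\phi^*)=0$, so the regularizer vanishes at the equilibrium and the fixed point is preserved, exactly as in Lemma~1.

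The core of the argument is to compute the Jacobian $J$ of the associated gradient vector field at $(\phi^*,\theta^*)$ and control its spectrum. By the function-space correspondence of \secref{normal_gan}, the block structure mirrors the matrix $T$ of the Dirac analysis: the generator self-block $\nabla_\theta^2\uV_2$ vanishes, because $\xD^*\equiv0$ on a neighborhood of the support (Assumption~\ref{ass1}) forces $\partial_x\xD^*=0$ there and hence kills the first-order-in-$\xD$ terms; the linear cross terms $h_1'(0)+h_2'(0)$ cancel by the magnitude condition in Assumption~\ref{ass3}; and the discriminator self-block reduces (after these cancellations) to a multiple of the positive-semidefinite Gram matrix $\ep_{p}[\partial_\phi\xD\,(\partial_\phi\xD)^\top]$ whose coefficient is shifted downward by the regularizer's $-\lambda$ contribution. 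The stated lower bound on $\lambda$ is what guarantees this coefficient is negative, so the block is negative definite on the normal space and the purely imaginary poles of the unregularized dynamic (Proposition~\ref{prop_stability} and \tabl{summarize_dynamic}) acquire strictly negative real part.

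The main obstacle is that the equilibrium is not isolated: Assumption~\ref{ass2} only provides $C^1$-manifolds $\mathcal{M}_\phi,\mathcal{M}_\theta$ of equivalent solutions, so $J$ necessarily has a kernel along their tangent spaces and the naive ``all eigenvalues in the open left half-plane'' reasoning fails. I would handle this exactly as in \citet{mescheder2018training} by splitting the parameter space into the tangent directions of the manifolds (neutral directions, corresponding to reparameterizations that leave $p_{\xG}=p$ and $\xD=0$ intact) and their orthogonal complement. On the complement, the first bullet of Assumption~\ref{ass2} ($\partial_v^2 h(\phi^*)\neq0$) makes the discriminator Gram matrix strictly positive definite there, while the second bullet ($\partial_w g(\theta^*)\neq0$) makes the off-diagonal coupling injective; the resulting block form $\bigl(\begin{smallmatrix}-A & B\\ -B^\top & 0\end{smallmatrix}\bigr)$ with $A\succ0$ and $B$ of full rank has all eigenvalues strictly in the left half-plane, which is the step that actually drives convergence of the generator directions through the coupling.

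Finally, I would transfer this continuous-time spectral statement to the discrete alternating scheme. For a small enough learning rate $\eta$, the Jacobian of the one-step update restricted to the normal space is $I+\eta J+O(\eta^2)$, whose eigenvalues lie inside the unit disk precisely when $\mathrm{Re}(\mathrm{eig}(J))<0$; combined with the observation that the neutral directions merely slide the iterate along the solution manifold, a projected-contraction / center-manifold argument (as in the appendix of \citet{mescheder2018training}) yields local convergence of the parameters to the equilibrium set. I expect the bookkeeping for the alternating (rather than simultaneous) update to be the most delicate remaining computation, since the one-step Jacobian is then a product of the two partial updates; however, to leading order in $\eta$ this product does not disturb the sign structure established above, so it should only affect constants in the admissible range of $\eta$ and $\lambda$.
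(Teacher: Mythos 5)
Your proposal follows essentially the same route as the paper's proof: compute the Jacobian of the regularized gradient field at $(\phi^*,\theta^*)$, establish its block structure ($J_{GG}=0$ via Assumption~1, antisymmetric off-diagonal blocks via the magnitude condition in Assumption~3, and a discriminator block made negative definite on the normal directions by the regularizer's Gram-matrix contribution scaled by $\lambda$), and then conclude via the manifold-splitting and small-learning-rate arguments of Theorem~4.1 in \citet{mescheder2018training}. The only difference is presentational: the paper stops after the Jacobian computation and cites that theorem's proof wholesale, whereas you unpack the tangent/normal decomposition, the spectral analysis of the block form, and the discrete-time $I+\eta J$ step that the citation encapsulates.
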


\begin{proof}
	The gradient flow defined by GAN's objective function is given as:
	\begin{align}
	v(\phi, \theta) = \begin{pmatrix}
	\nabla_\phi \uV_1(\xD; \xG) \\
	\nabla_\theta \uV_2(\xG; \xD)
	\end{pmatrix}.
	\end{align}
	Here we have:
	\begin{align}
	\nabla_\phi \uV_1(\xD; \xG) = \ep_{p(x)}[h_1^\prime(\xD(x))\frac{\partial \xD(x)}{\partial \phi}] \\ + \ep_{p_\xG(x)}[h_2^\prime(\xD(x))\frac{\partial \xD(x)}{\partial \phi}],
	\end{align}
	and 
	\begin{align}
	\nabla_\theta \uV_2(\xG; \xD) = \ep_{p_z(z)}h_3^\prime(\xD(\xG(z))))\frac{\partial \xD(\xG(z))}{\partial \xG(z)} \frac{\partial \xG(z)}{\partial \theta}.
	\end{align}
	
	Then the Jacobian matrix of the gradient flow is:
	\begin{align}
	J_U(\phi, \theta) &= \begin{pmatrix}
	\nabla^2_{\phi} \uV_1(\xD; \xG) & \nabla^2_{\theta\phi} \uV_1(\xD; \xG) \\
	\nabla^2_{\phi\theta} \uV_2(\xG; D) & \nabla^2_{\theta} \uV_2(\xG; \xD)
	\end{pmatrix} \\
	& = \begin{pmatrix}
	J_{DD}(\phi, \theta) & J_{GD}(\phi, \theta) \\
	J_{DG}(\phi, \theta) & J_{GG}(\phi, \theta)
	\end{pmatrix}.
	\end{align}	
	Note that at the equilibrium point $(\phi^*, \theta^*)$, we have $\xD^*(x) = 0$ around the support the data distribution. Therefore, we have $\frac{\partial \xD(x)}{\partial x} = 0$ and $\frac{\partial^2 \xD(x)}{\partial x^2} = 0$ for $x\sim p(x)$. It is easy to verify that $J_{GG}(\phi^*, \theta^*)$, i.e., $\nabla^2_{\theta} \uV_2(\xG^*; \xD^*)$, is a zero matrix. Similar, we have:
	\begin{align}
	J_{GD}(\phi^*, \theta^*) &= \nabla_\theta (\ep_{p_\xG(x)}[h_2^\prime(\xD(x))\frac{\partial \xD(x)}{\partial \phi}])\nonumber \\
	&=   \nabla_\theta (\ep_{p_z(z)}[h_2^\prime(\xD(\xG(z)))\frac{\partial \xD(\xG(z))}{\partial \phi}])\nonumber \\
	& = \ep_{p_z(z)}[h_2^\prime(0)\frac{\partial^2 \xD(\xG(z))}{\partial \theta \partial \phi}] \nonumber\\
	& = \ep_{p_z(z)}[h_2^\prime(0)\frac{\partial^2 \xD(\xG(z))}{\partial \xG(z) \partial \phi}\frac{\partial \xG(z)}{\partial \theta}]
	\end{align}
	and
	\begin{align}
	J_{DG}(\phi^*, \theta^*) &= \nabla_\phi  \ep_{p_z(z)}h_3^\prime(\xD(\xG(z))))\frac{\partial \xD(G(z))}{\partial \xG(z)} \frac{\partial \xG(z)}{\partial \theta} \nonumber\\
	&= 	 \ep_{p_z(z)}h_3^\prime(\xD(\xG(z))))\frac{\partial^2 \xD(\xG(z))}{\partial \phi \partial \xG(z)} \frac{\partial \xG(z)}{\partial \theta} \nonumber \\
	&= 	 \ep_{p_z(z)}h_3^\prime(0)\frac{\partial^2 \xD(\xG(z))}{\partial \phi \partial \xG(z)} \frac{\partial \xG(z)}{\partial \theta}
	\end{align}
	Since $h_2^\prime(0) = - h_3^\prime(0)$, we have $J_{DG} = - J_{GD}^T$.
		
	Note that with sufficient small learning rate, we have $p_u^t(x) = p_\xG(x) = p(x)$ around the equilibrium. Then we provide the gradient flow and it's Jacobian matrix of the regularization term 
	\begin{align}
	\uV_R(\phi, \theta) = \begin{pmatrix}
	\nabla_\phi -\uR_t(\xD) \\
	\nabla_\theta -\uR_t(\xD)
	\end{pmatrix}.
	\end{align}
	Since $\uR_t(D)$ is simply a function of $\phi$, $\nabla_\theta \uR_t(D)$ is a zero vector. The Jacobian matrix of the regularization term is given as:
	\begin{align}
	J_R(\phi, \theta) = \begin{pmatrix}
	\nabla^2_{\phi} -\uR_t(D)  & 0 \\
	0 & 0
	\end{pmatrix}.
	\end{align}
	
	With the Jacobian matrix of the regularized dynamics formulated as:
	\begin{align}
	J = J_U + J_R = \begin{pmatrix}
	J_{DD} - \nabla^2_{\phi} R_t(D) & J_{DG} \\
	J_{GD} & 0
	\end{pmatrix},
	\end{align}
	We can directly follow the proof of Theorem 4.1 in~\citet{mescheder2018training} in Appendix D.
\end{proof}

\subsection{Interpreting CLC-GAN in the parameter space}
In this paper, we mainly analyze our proposed method in the function space, including dynamic analysis and controller designing. Instead, our proposed method can also be interpreted as certain regularization terms on the Jacobian matrix of the training dynamics. Below we provide a formal demonstration.

First, we denote the equilibrium of $\xG$ and $\xD$ as $(\theta^*, \phi^*)$, where $p_\xG(x;\theta^*) = p(x)$ and $\xD(x; \phi^*) = 0$ for all $x$. Note that $\phi^*$ is also a global minimum point of the regularization term $\uR(D)=\int \xD^2(x) dx$. Then we have $\frac{\partial^2 \uR(\xD)}{\partial\phi^2}\succeq 0$.

We denote $U(\xD, \xG)$ as the objective function of the minimax optimization problem in WGAN without CLC regularization. Then the Jacobian matrix of the training dynamic can be denoted as:
\begin{align}
    J = \begin{pmatrix} \frac{\partial^2 \uU(\xD, \xG)}{\partial \phi^2} & \frac{\partial^2 \uU(\xD, \xG)}{\partial\phi\partial\theta} \\
    \frac{\partial^2 \uU(\xD, \xG)}{\partial\theta\partial\phi} & \frac{\partial^2 \uU(\xD, \xG)}{\partial \theta^2}
    \end{pmatrix}.
\end{align}
Because of the linearity of the derivation operation, the training dynamics of the WGAN with CLC regularization is denoted as:
\begin{align}
    J^\prime = J-J_L = J - \begin{pmatrix} \frac{\partial^2 L(\xD)}{\partial \phi^2} & \mathbf{0} \\
    \mathbf{0} & \mathbf{0}
    \end{pmatrix},
\end{align}
where we abuse the $\mathbf{0}$ to denote the zero matrix with certain size to match the size of $J$.
Since $\frac{\partial^2 \uR(\xD)}{\partial\phi^2}\succeq 0$, we have $-J_L\preceq 0$. Therefore, the CLC regularization introduces a negative semi-definite matrix to the original Jacobian matrix, which is helpful to stabilize the training dynamics of GANs.

\section{Understanding Existing Work as Closed-loop Control}

A side contribution of this paper is to understand existing methods~\cite{gidel2018negative} uniformly as certain CLC controllers.
The momentum is an example where \citet{gidel2018negative} provide some theoretical analysis of momentum in training GANs. Here we re-analyze the momentum using Dirac GAN under the perspective of control theory. 

The momentum method~\citep{qian1999momentum} is powerful when training neural networks, whose theoretical formulation is given by:
\begin{align}
\tilde{\phi}_{t+1} = \beta \tilde{\phi}_t + (1 - \beta) \nabla \phi_t,~\phi_{t+1} = \phi_t + \eta \tilde{\phi}_{t+1},
\label{eqn:momentum_theory}
\end{align}
where $\nabla\phi$ is the input of $\phi$'s dynamic, i.e., $u_\xD = c-\theta$.
The $\beta$ is the coefficient for the exponential decay.
However, momentum instead is not helpful when training GANs~\cite{radford2015unsupervised,mescheder2018training,brock2018large,gulrajani2017improved} where smaller $\beta$ or even zero is recommended to achieve better performance.

In control theory, the momentum is equivalent to adding an exponential decay to the input of the dynamics~\citep{an2018pid}:
\begin{align}
\tilde{\tth}(t) = \int_0^t \tth(u) \exp(-\tau(t-u))du.
\end{align}
The LT of an exponential decay dynamic is $\frac{1}{s+\tau}$, i.e., $\tilde{\fH}(s) = \frac{1}{s+\tau}\fH(s)$. $\tau > 0$ denotes the decay coefficient which depends on $\beta$.
Therefore, we can formulate the dynamics of Dirac GAN in the following:
\begin{align}
\begin{cases}
m_{\tphi}(t) = \int_0^t \tc(u) - \ttheta(u)) \exp(-\tau(t-u))du,\\
\frac{d\tphi}{dt} = m_{\tphi}(t), \\
\frac{d\ttheta}{dt} = \tphi(t).
\end{cases}\label{eqn:dynamic_momentum}
\end{align}
By applying LT, we have $\fM_{\tphi}(s) = \frac{1}{s+\tau}(\fC(s)-\fTheta(s))$ and $\fPhi$ can be represented as:
$$\fPhi(s) = \frac{s}{s^3 + \tau s^2 + 1}\fC(s).$$
With a positive $\tau$, there is at least one pole of this dynamic whose real part is larger than $0$, indicating the instability of the dynamics for GANs with momentum.
The result is consistent with~\cite{gidel2018negative}.

\section{Further Experimental Results on Synthetic Data}

\begin{figure*}
    \centering
    \includegraphics[width=\textwidth]{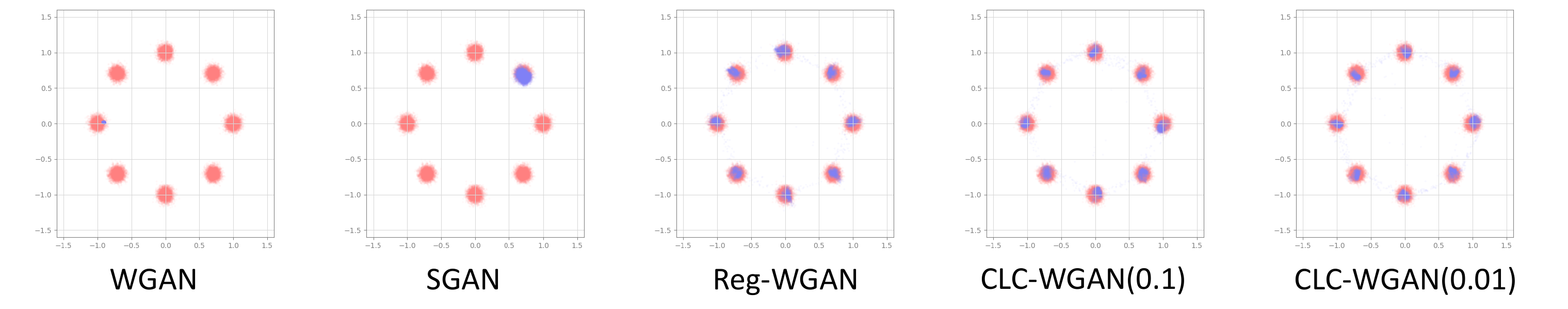}
    \caption{The generated samples for mixture of gaussian distribution. The red points demonstrate the location of data distribution and the blue points are generated samples. Each distribution is plotted using kernel density estimation with 50,000 samples. }
    \label{fig:result_toy}
\end{figure*}

In this section, we evaluate our proposed method on a mixture of Gaussian on the two dimensions. The data distribution consists of $8$ 2D isotropic Gaussian distributions arranged in a ring, where the radius of the ring is $1$, and the deviation of each component Gaussian distribution is $0.05$. For the coefficient $\lambda$, we follow the setting in the spectral normalization as $\lambda\in\{0.01, 0.05, 0.1\}$. We adopt two-layer MLPs for both the generator and the discriminator which consist of $128-512$ units. The batch size is is 512.

The generated results are illustrated in \fig{result_toy} and we further provide the dynamics of the generator distribution in \fig{dynamic_toy}. As we can see, the unregularized WGAN and SGAN suffer from severe model collapse problem and cannot cover the whole data distribution. Besides, the oscillation can be observed during the training process of WGAN: the generator distribution oscillates among the modes of data distribution.
Our method can successfully cover all modes compared to the WGAN and SGAN and the dynamics are converged instead of oscillation.


\begin{figure*}
    \centering
    \includegraphics[width=\textwidth]{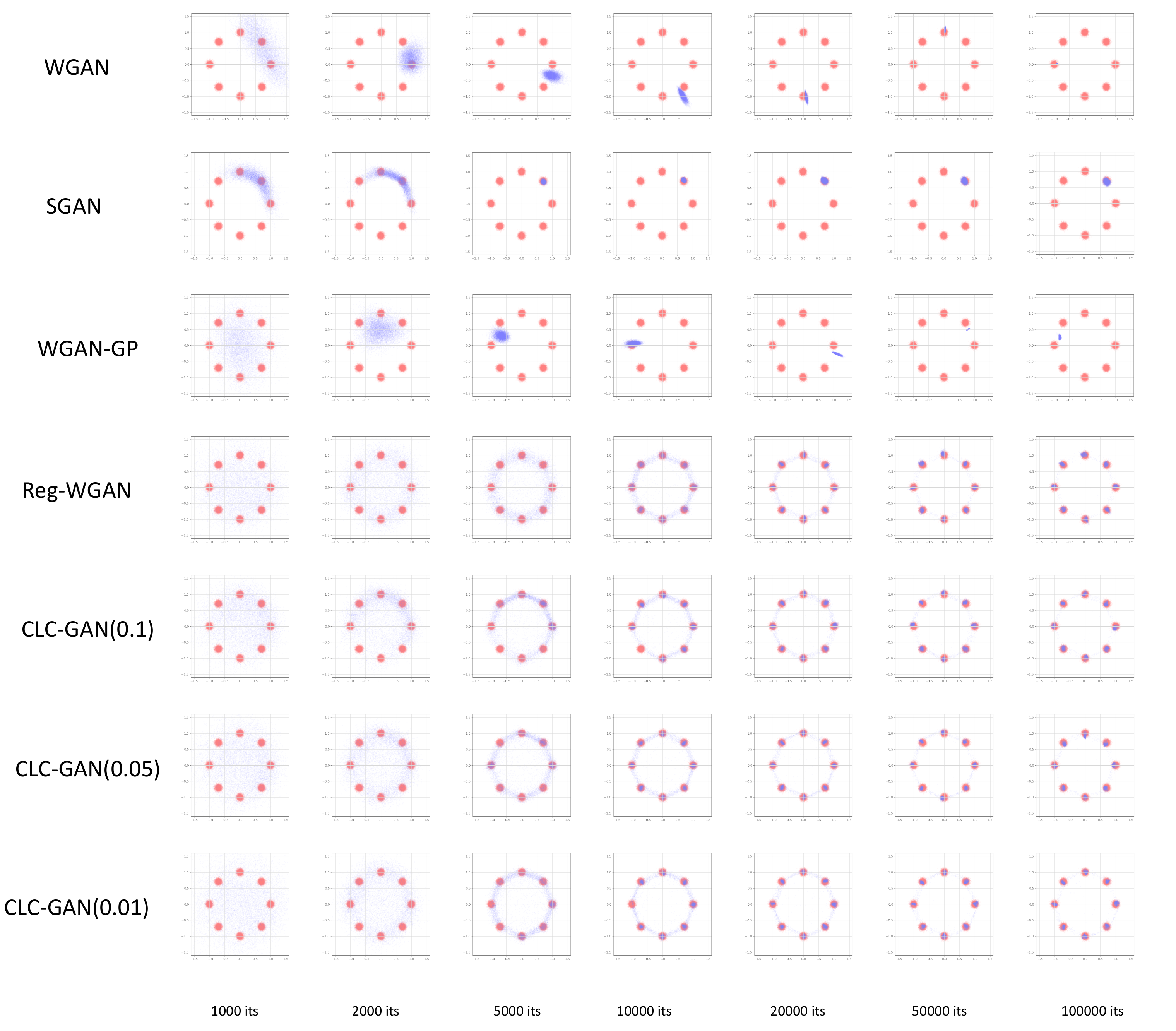}
    \caption{The training dynamics of various GANs on synthetic data.}
    \label{fig:dynamic_toy}
\end{figure*}

\end{document}